\theoremstyle{plain}
\newtheorem*{theorem*}{Theorem}
\newtheorem{theorem}{Theorem}[section]
\newtheorem{proposition}[theorem]{Proposition}
\newtheorem{lemma}[theorem]{Lemma}
\newtheorem*{lemma*}{Lemma}
\newtheorem{corollary}[theorem]{Corollary}
\theoremstyle{definition}
\newtheorem{definition}[theorem]{Definition}
\theoremstyle{remark}
\newcommand{\numeq}[1]{\begin{align}#1\end{align}}
\newcommand{\eq}[1]{\begin{align*}#1\end{align*}}
\newcommand{\E}{\mathbb{E}}
\newcommand{\R}{\mathbb{R}}
\newcommand{\Tr}{\textnormal{Tr}}
\renewcommand{\Pr}{\mathbb{P}}
\title{Tanimoto Random Features for Scalable Molecular Machine Learning}
\author{%
  Austin Tripp \\
  Unversity of Cambridge \\
  \texttt{ajt212@cam.ac.uk} \\
  \And
  Sergio Bacallado \\
  Unversity of Cambridge \\
  \texttt{sb2116@cam.ac.uk} \\
  \AND
  Sukriti Singh \\
  University of Cambridge \\
  \texttt{ss2971@cam.ac.uk} \\
  \And
  José Miguel Hernández-Lobato \\
  University of Cambridge \\
  \texttt{jmh233@cam.ac.uk} \\
}
\newcommand{\codeURL}{https://github.com/AustinT/tanimoto-random-features-neurips23}
\begin{document}

\maketitle

\begin{abstract}
The Tanimoto coefficient is commonly used to measure the similarity between molecules
represented as discrete fingerprints,
either as a distance metric or a positive definite kernel.
While many kernel methods can be accelerated using random feature approximations,
at present there is a lack of such approximations for the Tanimoto kernel.
In this paper we propose two kinds of novel random features to allow this kernel to scale to large datasets,
and in the process discover a novel extension of the kernel to real-valued vectors.
We theoretically characterize these random features, and provide error bounds on the spectral norm of the Gram matrix.
Experimentally, we show that these random features are effective at
approximating the Tanimoto coefficient of real-world datasets
and are useful for molecular property prediction and optimization tasks.

\end{abstract}

\section{Introduction}

In recent years there have been notable advances in the use of machine learning (ML) for drug discovery, including molecule generation and property prediction \citep{dara2022machine}.
Despite ceaseless progress in deep learning,
conventional methods such as support vector machines or random forest trained on \emph{molecular fingerprints}
are still competitive in the low-data regime \citep{walters2020applications, stanley2021fs}.
These fingerprints essentially encode fragments from a molecule into a sparse vector,
thereby compactly representing a large number of molecular substructures \citep{david2020molecular}.
They are extensively used in virtual screening for substructure and similarity searches as well as an input for ML models \citep{cereto2015molecular, granda2018controlling}.

The \emph{Tanimoto coefficient}
(also known as the \emph{Jaccard index})
stands out as a natural way to compare such fingerprints.
This coefficient is most commonly expressed as a function on sets $T_S$
or as a function of non-negative vectors $T_{MM}$
\citep{jaccard1912distribution,tanimoto1958elementary,ralaivola2005graph,costa2021further,tan2016introduction}:
\numeq{
\label{eqn:tanimoto-for-sets}
    T_S(X, X') = \frac{|X\cap X'|}{|X\cup X'|},~~ X,X'\subseteq \Omega, \qquad
    T_{MM}(x, x') = \frac{\sum_i \min(x_i, x'_i)}{\sum_i \max(x_i, x'_i)},~~ x,x'\in\mathbb{R}^d_{\geq0} \,.
}
If $a$ and $b$ are binary indicator vectors representing sets $A$ and $B$
respectively,
then $T_S(A,B)=T_{MM}(a,b)$.
Therefore $T_{MM}$ can be viewed as a generalization of $T_S$;
for this reason it is sometimes called the ``weighted Jaccard coefficient''
or min-max coefficient.

The Tanimoto coefficient is widely used in machine learning and cheminformatics
to compute similarities between molecular fingerprints
\citep{bajusz2015tanimoto,o2016comparing,miranda2021differential},
chiefly because of the following properties:
\begin{enumerate}
    \item \textbf{Clear Interpretation:}
    The value of $T_{MM}(x,x')$ represents the degree of overlap between $x$ and $x'$
    and is always between 0 and 1.
    $T(x,x')=1$ only when $x=x'$.
    \item \textbf{Kernel:}
    $T_{MM}(\cdot,\cdot)$ is positive definite \citep{gower1971general,ralaivola2005graph},
    meaning it can be used as the kernel for algorithms like support vector machines or Gaussian processes.
    \item \textbf{Metric:}
    $1-T_{MM}(x,x')$ is a valid distance metric (typically called \emph{Jaccard/Soergel distance})
    and can therefore be used in nearest-neighbour and clustering algorithms \citep{marczewski1958certain,levandowsky1971distance}.
\end{enumerate}

In this paper, we present and characterize two efficient low-rank approximations for large matrices of Tanimoto coefficients.
The first method, presented in section~\ref{sec:minmax-rf},
uses a random hash function to index a random tensor
and enjoys exceptionally low variance.
The second method, presented in section~\ref{sec:t-dp},
uses a power series expansion of the Tanimoto similarity for binary vectors.
This line of research also unexpectedly led to the discovery of a new generalization
of the Tanimoto coefficient to arbitrary vectors in $\mathbb{R}^d$, $T_{DP}$,
which is also a kernel and can be used to form a distance metric.
In section~\ref{sec:experiments} we demonstrate experimentally
that our random features are effective at approximating
Tanimoto matrices of real-world fingerprint data
and demonstrate its application to molecular property prediction and optimization problems.

\section{Background: kernel methods and random features}

Kernel methods are a broad class of machine learning algorithms
which make predictions using a positive definite \emph{kernel function}
$k:\mathcal{X}\times\mathcal{X}\mapsto\mathbb{R}$
\citep{scholkopf2002learning}.
Common methods in this class are support vector machines \citep{cortes1995support}
and Gaussian processes \citep{williams2006gaussian}.
Given a dataset of $n$ data points,
training most kernel methods requires computing 
the $n\times n$ kernel matrix\footnote{
We write $x_i$ to refer to the $i$th element of a vector and $x^{(i)}$ to denote the $i$th vector in a list.
}
$K_{i,j} = k(x^{(i)},x^{(j)})$ (with $O(n^2)$ time complexity)
and possibly inverting it (with $O(n^3)$ time complexity).
Because of this, applying kernel methods to large datasets generally requires approximations.

Given a kernel $k$, a \emph{random features map} 
is a random function
$f:\mathcal{X}\mapsto\mathbb{R}^M$, with the property that  $f(x)\cdot f(x')$ approximates $k(x,x')$ for every pair $x,x'\in\mathcal X$. The approximation is often exact in expectation:
\begin{equation}\label{eqn:random-feature-definition}
    \mathbb{E}_f\left[f(x)\cdot f(x')\right]=k(x,x') \quad \text{for all }x,x'\in\mathcal X.
\end{equation}
Random features allow the kernel matrix to be approximated as 
$\widehat K_{i,j} = f(x^{(i)})\cdot f(x^{(j})$.
Because this matrix has rank at most $M$,
this approximation generally reduces the cost of $O(n^3)/O(n^2)$ computations to $O(M^3)/O(nM^2)$,
i.e.\@ at most linear in $n$.

The seminal work of \citet{rahimi2007random}, which coined the term random features, gave a general method based on Fourier analysis to construct random features for any \emph{Bochner} or \emph{stationary kernel}, for which $k(x,x')$ is a function of $x-x'$. This class includes many common kernels including the RBF and Mat\'{e}rn kernels, but excludes $T_{MM}$.
Subsequent works have proposed random features for other kernels including the polynomial kernel and the arc-cosine kernel 
\citep{liu2021random}.
However, there is no general formula to define random features for non-stationary kernels, such as $T_{MM}$.

A random features map is sometimes called a \emph{data-oblivious sketch}, to distinguish it from other \emph{data-dependent} low-rank approximation methods which depend on a given dataset $x^{(1)},\dots,x^{(n)}$.
Examples of data-dependent low rank sketches are the Nystr\"{o}m approximation and leverage-score sampling \citep{drineas2005nystrom,drineas2012fast}.
Although data-dependent methods may result in lower approximation errors for a given dataset,
data-oblivious sketches are naturally parallelizable and useful in cases where the dataset  changes over time (e.g.\@ streaming or optimization)
or for ultra-large datasets which may not fit in memory.

\section{Low-variance random features for Tanimoto and MinMax kernels}\label{sec:minmax-rf}

Outside of chemistry, the Tanimoto coefficient has been widely used to measure the similarity
between text documents
and rank results in search engines.
To quickly find documents with high similarity to a user's query,
many prior works have studied random
\emph{hashes} for the Tanimoto coefficient,
i.e.\@ a family of random functions $h:\mathcal{X}\mapsto\{1,\ldots,K\}$ such that
\begin{equation}\label{eqn:random-hash-defn}
    \Pr_h \left( h(x) = h(x') \right) = T_{MM}(x,x').
\end{equation}
Although initially these hashes were only applicable to binary inputs
\citep{broder1997resemblance,broder1998min,charikar2002similarity},
more recent work has produced efficient random hashes for arbitrary non-negative vectors
\citep{manasse2010consistent,ioffe2010improved,shrivastava2016simple}.
In this section we propose a novel family of low-variance random features for $T_{MM}$
(and by extension $T_{S}$) which is based on random hashes.

It is important to clarify that although the definition of random hashes in equation~\ref{eqn:random-hash-defn}
resembles the definition of random features in equation~\ref{eqn:random-feature-definition},
they are actually distinct.
Random hash functions output \emph{discrete} objects (typically an integer or tuple of integers)
whose probability of \emph{equality} is $T_{MM}$,
while random features must output \emph{vectors} in $\mathbb{R}^M$ whose expected \emph{inner product}
is $T_{MM}$.
If a random hash maps to $\{1,\ldots,K\}$,
a naive approach may be to use a $K$-dimensional indicator vector as a random feature.
Because hash equality is a binary outcome,
the variance of such random features would be $T_{MM}\left(1-T_{MM}\right)$.
Realistic hash functions like that of \citet{ioffe2010improved}
use $K\ge 10^3$,
implying a ``variance per feature'' of $\approx 10^2$,
which is undesirably high.

Our main insight is that low-variance scalar random features can be created
by using a random hash to \emph{index} a suitably distributed random vector.
In the following theorem, we show that a vector of i.i.d.\@ samples
from any distribution with the correct first and second moments
can be combined with random hashes to produce random features for $T_{MM}$.
\begin{theorem}\label{thm:general-minmax-rf}
    Let $h:\mathcal X \to \mathcal Y$ be a random hash for $T_{MM}$ satisfying equation~\ref{eqn:random-hash-defn}, with $|\mathcal Y|=K$. Furthermore, let $\xi$ be a random variable such that $\mathbb{E}[\xi]=0$ and $\mathbb{E}[\xi^2]=1$,
    and let $\Xi=[\xi_1,\ldots,\xi_K]$ be a vector of independent copies of $\xi$.
    Then the 1D random features 
    \begin{equation}\label{eqn:minmax-rf}
        \phi_{\Xi,h}(x)=\Xi_{h(x)}  
    \end{equation}
    estimate $T_{MM}$ without bias: 
    $\E_{\Xi,h}(\phi_{\Xi,h}(x)\cdot\phi_{\Xi,h}(x')) = T_{MM}(x,x')$,
    and with variance
    \begin{equation}\label{eqn:general-minmax-rf-variance}
    \mathbb{V}_{\Xi,h}\left[\phi_{\Xi,h}(x)\cdot \phi_{\Xi,h}(x')\right] = 1+T_{MM}(x,x')\left(E[\xi^4]-1-T_{MM}(x,x')\right) \ge 1-T_{MM}(x,x')^2.
    \end{equation}
    Furthermore, the lower bound is tight and achieved when $\xi$ is Rademacher distributed (i.e. uniform in $\{-1,1\}$).
\end{theorem}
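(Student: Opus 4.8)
The plan is to condition on the outcome of the random hash and reduce everything to elementary moment computations. Write $T=T_{MM}(x,x')$ and set $a=h(x)$, $b=h(x')$, so that since the features are one-dimensional the ``dot product'' is just the scalar product $\phi_{\Xi,h}(x)\phi_{\Xi,h}(x')=\xi_a\xi_b$. The structural fact driving the whole argument is a dichotomy: either the hash collides ($a=b$), which by the defining property~\eqref{eqn:random-hash-defn} happens with probability exactly $T$, or it does not ($a\neq b$), in which case $\xi_a$ and $\xi_b$ are two \emph{distinct}, hence independent, copies of $\xi$. Throughout I use that $\Xi$ and $h$ are independent sources of randomness, so conditioning on a hash event leaves the law of $\Xi$ undisturbed; the computation is then just the tower property.

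For unbiasedness I would take expectations conditionally on $h$. On the collision event $\xi_a\xi_b=\xi_a^2$ has conditional expectation $\E[\xi^2]=1$; on the non-collision event, independence together with $\E[\xi]=0$ gives conditional expectation $\E[\xi_a]\E[\xi_b]=0$. Averaging over the hash with weights $T$ and $1-T$ yields $\E[\xi_a\xi_b]=T$, which is the claimed unbiasedness. For the variance, since the mean is $T$ it suffices to compute the second moment $\E[(\xi_a\xi_b)^2]=\E[\xi_a^2\xi_b^2]$ and subtract $T^2$. The identical collision/non-collision split gives $\E[\xi^4]$ on collision and $\E[\xi^2]^2=1$ otherwise, so $\E[\xi_a^2\xi_b^2]=T\,\E[\xi^4]+(1-T)$; subtracting $T^2$ and regrouping recovers the stated $1+T(\E[\xi^4]-1-T)$.

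For the lower bound and its tightness, I would note that the variance is increasing in $\E[\xi^4]$ because the coefficient $T$ is nonnegative, so minimizing the variance over admissible $\xi$ amounts to minimizing the fourth moment. Here $\E[\xi^4]=\E[(\xi^2)^2]\geq(\E[\xi^2])^2=1$ by Jensen's inequality (equivalently $\Var(\xi^2)\geq 0$), with equality precisely when $\xi^2$ is almost surely the constant $1$. Combined with $\E[\xi]=0$, this forces $\xi$ to be Rademacher. Substituting $\E[\xi^4]=1$ collapses the variance to $1-T^2=1-T_{MM}(x,x')^2$, establishing both the uniform bound and its attainment.

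There is no genuine obstacle here: the result is essentially a careful second-moment calculation once the collision dichotomy is isolated. The only point demanding care is the conditioning bookkeeping, namely keeping $\Xi\perp h$ explicit so that the conditional expectations of the $\xi$-moments do not depend on \emph{which} index the hash lands on, only on whether the two indices coincide.
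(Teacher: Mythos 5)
Your proposal is correct and follows essentially the same route as the paper's proof: condition on the collision event $\{h(x)=h(x')\}$ (probability $T_{MM}$ by the hash property), use independence of the $\Xi$ entries and the assumed moments of $\xi$ to get the mean and second moment, and invoke Jensen's inequality $\E[\xi^4]\ge\E[\xi^2]^2=1$ for the lower bound. Your only addition is the observation that equality in Jensen forces $\xi^2=1$ almost surely, so that together with $\E[\xi]=0$ the Rademacher law is the \emph{unique} minimizer --- a slight strengthening of the paper's statement, which merely verifies attainment by substituting $\E[\xi^4]=1$.
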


The proof is given in Appendix~\ref{apdx:minmax-rf-proof}. This theorem shows that Rademacher $\xi$ yields the smallest possible variance in the class of random features defined in \cref{eqn:minmax-rf}.

These random features have many desirable properties. First, unlike random features for many other kernels such as the Gaussian kernel \citep{liu2021random},
the variance does not depend on the dimension of the input data or norms of the input vectors.
Second, because these random features are 1-dimensional scalars,
$M$ independent random feature functions can be concatenated to produce $M$-dimensional random feature
vectors with variance at most $1/M$.
This suggests that as few as $\approx10^3$ random features could be used in practical problems.
Third, although each instance of $\Xi$ requires storing a $K$ dimensional random vector,
if $\xi$ is chosen to be Rademacher distributed, then each entry can be stored with a single bit,
requiring just $\approx 100$ kB of memory when $K=10^6$.

One disadvantage of these random features
is that they are not continuous or differentiable with respect to their inputs.
For applications such as Bayesian optimization which require optimizing over model inputs
this would create difficulties as gradient-based optimization could no longer be done.
It was this disadvantage which motivated us to search for other random features,
leading to the discoveries in the following section.

\section{Tanimoto dot product kernel and its random features}\label{sec:t-dp}

\citet{ralaivola2005graph} gave a definition for the Tanimoto coefficient involving dot products:
\begin{equation}\label{eqn:tanidot-kernel}
    T_{DP}(x, x') = \frac{x\cdot x'}{\|x\|^2+\|x'\|^2-x\cdot x'},
\end{equation}
with $T_{DP}(x,x')=1$ when $x,x'=0$. It is easy to check that $T_{DP}(x,x')=T_{MM}(x,x')$ on binary vectors, which was used by \citet{ralaivola2005graph} to prove that $T_{DP}$ is a kernel on the space $\{0,1\}^d$, referencing prior work by \citet{gower1971general}.  However, $T_{DP}$ is not identical to $T_{MM}$ for general inputs $x,x'\in \R_{\ge 0}^d$. Here, we give the first proof that $T_{DP}$ is a positive definite function in $\R^d$ and thus, also a valid kernel in this space.

\begin{theorem}\label{thm:tdp-is-pd}
    For $x,x'\neq 0$ in $\R^d$, we have
    \begin{equation}\label{eqn:tanidot-power-series}
       T_{DP}(x,x')= \sum_{r=1}^\infty \left(x\cdot x'\right)^r \left(\|x\|^2+\|x'\|^2\right)^{-r},
    \end{equation}
    where the series is absolutely convergent. The function $T_{DP}$ is a positive definite kernel in $\R^d$.
\end{theorem}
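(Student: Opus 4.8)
The plan is to prove the two assertions in sequence: first establish the power-series identity together with its absolute convergence, and then use term-by-term positive definiteness to conclude that $T_{DP}$ is a kernel.

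For the series, I would abbreviate $a = x\cdot x'$ and $b = \|x\|^2+\|x'\|^2$, so that $T_{DP}(x,x') = a/(b-a)$. Since $x,x'\neq 0$ we have $b>0$, and combining Cauchy--Schwarz with AM--GM gives $|a| \le \|x\|\|x'\| \le \tfrac12(\|x\|^2+\|x'\|^2) = b/2$. Hence $|a/b| \le 1/2 < 1$, so the geometric series $\sum_{r=1}^\infty (a/b)^r = \frac{a/b}{1-a/b}$ converges absolutely and equals $a/(b-a) = T_{DP}(x,x')$. Since $(a/b)^r = (x\cdot x')^r(\|x\|^2+\|x'\|^2)^{-r}$, this is exactly \cref{eqn:tanidot-power-series}.

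For positive definiteness I would show that each summand $k_r(x,x') = (x\cdot x')^r (\|x\|^2+\|x'\|^2)^{-r}$ is a positive definite kernel and then invoke closure of positive definiteness under products, nonnegative integration, and convergent sums. The factor $(x\cdot x')^r$ is the $r$-fold Schur product of the linear kernel $x\cdot x'$ and is therefore positive definite. The factor $(\|x\|^2+\|x'\|^2)^{-r}$ is where the real work lies: I would use the Gamma integral representation $s^{-r} = \tfrac{1}{\Gamma(r)}\int_0^\infty t^{r-1}e^{-st}\,dt$ for $s>0$, apply it with $s = \|x\|^2+\|x'\|^2 > 0$, and factor the exponential as $e^{-t(\|x\|^2+\|x'\|^2)} = e^{-t\|x\|^2}\,e^{-t\|x'\|^2}$. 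For each fixed $t$ this is a rank-one kernel $u_t(x)u_t(x')$ with $u_t(x)=e^{-t\|x\|^2}$, hence positive definite, and integrating against the nonnegative measure $t^{r-1}\,dt/\Gamma(r)$ preserves positive definiteness. Taking the Schur product of the two factors then shows that $k_r$ is positive definite.

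Finally, every finite partial sum $\sum_{r=1}^N k_r$ is a sum of positive definite kernels and is therefore positive definite, and by the first part these partial sums converge pointwise to $T_{DP}$ on any finite set of nonzero points. The associated Gram matrix for $T_{DP}$ is thus a limit of positive semidefinite matrices, and since the positive semidefinite cone is closed, it is itself positive semidefinite (the zero vector, if present, contributes an isolated row and column under the stated convention and does not affect semidefiniteness). I expect the main obstacle to be the middle step, verifying that $(\|x\|^2+\|x'\|^2)^{-r}$ is positive definite; the Gamma-integral/completely-monotone device is the nonobvious ingredient, whereas the convergence bound and the product/sum closure properties are routine.
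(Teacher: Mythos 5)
Your proposal is correct and matches the paper's proof essentially step for step: the same Cauchy--Schwarz bound $|x\cdot x'| \le \tfrac{1}{2}(\|x\|^2+\|x'\|^2)$ giving the absolutely convergent geometric series, the same integral representation of the reciprocal as a mixture of rank-one kernels $e^{-t\|x\|^2}e^{-t\|x'\|^2}$, and the same closure of positive definiteness under Schur products, sums, and pointwise limits, with the zero vector handled by block-diagonality of the Gram matrix. The only cosmetic difference is that the paper proves positive definiteness of $(\|x\|^2+\|x'\|^2)^{-1}$ (your Gamma integral with $r=1$) and takes $r$-fold products of the composite summand, whereas you apply the Gamma representation to $(\|x\|^2+\|x'\|^2)^{-r}$ directly.
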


It has been noticed previously that, unlike $1-T_{MM}$, the function $1-T_{DP}$ is \emph{not} a distance metric on non-binary inputs \citep{kosub2019note}. Indeed, when $d=1$, the inputs $\{1,2,4\}$   violate the triangle inequality. However, we can easily derive a distance metric from $T_{DP}$.

\begin{corollary}\label{thm:tdp-is-metric}
    $d_{DP}(x,x')=\sqrt{1-T_{DP}(x,x')}$ 
    corresponds to the RKHS norm of the function $\frac{1}{2}[T_{DP}(x,\cdot)-T_{DP}(x',\cdot)]$
    and is therefore a valid distance metric on $\mathbb{R}^d$.
\end{corollary}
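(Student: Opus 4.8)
The plan is to exploit the fact, established in \cref{thm:tdp-is-pd}, that $T_{DP}$ is a positive definite kernel on $\R^d$. By the Moore--Aronszajn theorem this guarantees a reproducing kernel Hilbert space $\mathcal H$ with canonical feature map $\phi(x) = T_{DP}(x,\cdot)\in\mathcal H$ satisfying $\langle \phi(x),\phi(x')\rangle_{\mathcal H} = T_{DP}(x,x')$. The central observation is that \emph{any} map of a set into a Hilbert space induces a pseudometric through the ambient norm, $(x,x')\mapsto\|\phi(x)-\phi(x')\|_{\mathcal H}$: nonnegativity, symmetry, and the triangle inequality are all inherited directly from that norm. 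Thus the only real work is (i) to identify $d_{DP}$ with this Hilbert-space distance up to a positive scalar, and (ii) to upgrade the pseudometric to a genuine metric by checking the identity of indiscernibles.

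For step (i) I would first record the normalization $T_{DP}(x,x)=\frac{x\cdot x}{2\|x\|^2-x\cdot x}=1$, so that the feature map lands on the unit sphere of $\mathcal H$. Expanding the squared norm of the claimed function by bilinearity and the reproducing property gives
\[
\left\|\tfrac12\bigl[T_{DP}(x,\cdot)-T_{DP}(x',\cdot)\bigr]\right\|_{\mathcal H}^2 =\tfrac14\bigl(T_{DP}(x,x)-2T_{DP}(x,x')+T_{DP}(x',x')\bigr) =\tfrac12\bigl(1-T_{DP}(x,x')\bigr),
\]
so that $\|\tfrac12[\phi(x)-\phi(x')]\|_{\mathcal H}=\tfrac{1}{\sqrt2}\,d_{DP}(x,x')$. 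In other words $d_{DP}$ is a fixed positive multiple of a bona fide Hilbert-space distance, and since a positive scalar multiple of a (pseudo)metric is again a (pseudo)metric, all three ``easy'' axioms transfer to $d_{DP}$ immediately.

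The one axiom that does not come for free is separation, $d_{DP}(x,x')=0\Rightarrow x=x'$. I would avoid appealing to injectivity of $\phi$ (which would require \emph{strict} positive definiteness, something \cref{thm:tdp-is-pd} does not assert) and argue algebraically instead: $d_{DP}(x,x')=0$ iff $T_{DP}(x,x')=1$, which for $x,x'\neq0$ is equivalent to $2\,x\cdot x'=\|x\|^2+\|x'\|^2$, i.e.\ $\|x-x'\|^2=0$, i.e.\ $x=x'$. The degenerate cases involving the zero vector are handled separately using the conventions $T_{DP}(0,0)=1$ and $T_{DP}(0,x')=0$ for $x'\neq 0$, which give $d_{DP}(0,0)=0$ and $d_{DP}(0,x')=1>0$ as required. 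Combined with the inherited axioms, this shows $d_{DP}$ is a metric on all of $\R^d$.

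The proof is almost entirely routine; the only genuine subtlety is the separation step, where the temptation to invoke feature-map injectivity must be replaced by the direct computation $T_{DP}(x,x')=1\iff\|x-x'\|=0$, together with the minor bookkeeping needed to reconcile the $\tfrac12$ prefactor of the stated function with the $\sqrt{1-T_{DP}}$ normalization of $d_{DP}$ (the two agree up to the constant $\sqrt2$).
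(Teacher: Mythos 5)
Your proposal is correct and follows essentially the same route as the paper: invoke Moore--Aronszajn to get the RKHS with canonical feature map $\phi(x)=T_{DP}(x,\cdot)$, expand the squared RKHS norm of the difference via the reproducing property and the normalization $T_{DP}(x,x)=1$, and inherit the metric axioms from the Hilbert norm. You go beyond the paper in two respects, both to your credit. First, your bookkeeping on the constant is right and the paper's is not: from $\|T_{DP}(x,\cdot)-T_{DP}(x',\cdot)\|_{\mathcal H}^2 = 2-2T_{DP}(x,x')$ one gets $\bigl\|\tfrac12[T_{DP}(x,\cdot)-T_{DP}(x',\cdot)]\bigr\|_{\mathcal H} = \tfrac{1}{\sqrt2}\sqrt{1-T_{DP}(x,x')}$, so the statement as written (and the paper's final display) is off by a factor of $\sqrt2$; the correct prefactor is $\tfrac{1}{\sqrt2}$, which, as you note, is immaterial to the conclusion since a positive scalar multiple of a metric is a metric. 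Second, the paper ends with ``the RKHS norm on the left is clearly a distance metric,'' which silently skips the identity of indiscernibles: the Hilbert-space pullback distance is a priori only a pseudometric unless $\phi$ separates points. Your direct algebraic check --- $T_{DP}(x,x')=1 \iff 2\,x\cdot x' = \|x\|^2+\|x'\|^2 \iff \|x-x'\|^2=0$ for nonzero inputs, with the zero-vector conventions handled separately --- supplies exactly the missing step, and your choice to avoid appealing to strict positive definiteness (which \cref{thm:tdp-is-pd} does not assert) is the right instinct.
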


Proofs are given in Appendix~\ref{apdx:tdp-is-pd-proof}. These results imply that $T_{DP}$, like $T_{MM}$, is an extension of the set-valued Tanimoto coefficient (equation~\ref{eqn:tanimoto-for-sets})
to real vectors and can be used as a substitute for $T_{MM}$ in machine learning algorithms that require a kernel or distance metric. Unlike $T_{MM}$, the kernel $T_{DP}$
is differentiable everywhere with respect to its inputs.
It can also be computed in batches using matrix-matrix multiplication,
allowing for efficient vectorized computation.

We now consider producing a random features approximation to $T_{DP}$ for large-scale applications.
Motivated by the close relationship between $T_{DP}$ and $T_{MM}$,
one may be tempted to find a random hash for $T_{DP}$
and apply the techniques developed in section~\ref{sec:minmax-rf}.
Unfortunately, we are able to prove that this is not possible.
\begin{proposition}\label{no random hash for tdp}
    There exists no random hash function for $T_{DP}$ over non-binary vectors.
\end{proposition}
\begin{proof}
    \citet{charikar2002similarity} proved that if $s(x,x')$ is a similarity function
    for which there exists a random hash,
    then $1-s(x,x')$ must satisfy the triangle inequality (see their Lemma 1).
    Because $1-T_{DP}(x,x')$ does not satisfy the triangle inequality it follows by contradiction
    that there does not exist a random hash for $T_{DP}$.
\end{proof}

Therefore producing random features for $T_{DP}$ will require another approach.
In the remainder of this section we present a framework to produce random features for $T_{DP}$ by directly approximating its power series
(equation~\ref{eqn:tanidot-power-series}).
We first describe a method to produce random features for $(\|x\|^2+\|x'\|^2)^{-r}$ (\ref{ssec:prefactor-rf}).
Then we describe how these features can be combined with existing random features for the polynomial kernel
to approximate $T_{DP}$'s truncated power series (\ref{sec:tdp-rf}--\ref{ssec:implementing tdp rf}).
Lastly, we present an error bound for the kernel matrix of a dataset in the spectral norm,
showing that the required dimension for the sketch scales optimally with the stable rank of the kernel matrix
(\ref{sec:tdp-error-bound}).

\subsection{Random features for the ``prefactor'' $\left(\|x\|^2+\|x'\|^2\right)^{-r}$}\label{ssec:prefactor-rf}

In this section we present a random feature map for the positive definite kernel $\textstyle (x,x')\mapsto (\|x\|^2 + \|x'\|^2)^{-r}$,
which we will refer to as the the \emph{prefactor}.
We defer all proofs to Appendix~\ref{apdx:prefactor-sketch}.
We begin with the following lemma, which defines scalar random features for the prefactor:

\begin{lemma}\label{scalar prefactor features lemma}
    If  $Z\sim\mathrm{Gamma}(s,c)$ (where $c$ is a \emph{rate} parameter), then
    \numeq{
        \varphi_{r,Z}(x) = e^{(1/2-\|x\|^2)Z} Z^{(r-s)/2}\sqrt{c^{-s}e^{(c-1)Z}\Gamma(s)/{\Gamma(r)}}
    } is an unbiased scalar random feature for the prefactor $(\|x\|^2 + \|x'\|^2)^{-r}$
    for all $s,c>0$.
\end{lemma}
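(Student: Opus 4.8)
The plan is to establish the unbiasedness identity $\E_Z[\varphi_{r,Z}(x)\,\varphi_{r,Z}(x')] = (\|x\|^2+\|x'\|^2)^{-r}$ by direct computation, organised around the classical integral representation of the prefactor. The conceptual starting point is the identity $\lambda^{-r} = \tfrac{1}{\Gamma(r)}\int_0^\infty z^{r-1}e^{-\lambda z}\,dz$, valid for $\lambda,r>0$. Taking $\lambda = \|x\|^2+\|x'\|^2$ and factorising $e^{-\lambda z} = e^{-\|x\|^2 z}\,e^{-\|x'\|^2 z}$ already exhibits the prefactor as an integral of a product that separates into one factor in $x$ and one in $x'$ --- precisely the structure a random feature needs. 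All that remains is to realise the measure $\tfrac{1}{\Gamma(r)}z^{r-1}\,dz$ as the law of $Z\sim\mathrm{Gamma}(s,c)$ multiplied by an importance weight, and to distribute that weight symmetrically across the two arguments; this is exactly what the (otherwise opaque) form of $\varphi_{r,Z}$ encodes.

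Concretely, I would first form the product $\varphi_{r,Z}(x)\,\varphi_{r,Z}(x')$. Since the radicand and the half-power depend on $Z$ alone, the two square roots multiply to $c^{-s}e^{(c-1)Z}\Gamma(s)/\Gamma(r)$ and the two copies of $Z^{(r-s)/2}$ multiply to $Z^{r-s}$, while the only $x$- and $x'$-dependent factors, $e^{(1/2-\|x\|^2)Z}$ and $e^{(1/2-\|x'\|^2)Z}$, multiply to $e^{(1-\|x\|^2-\|x'\|^2)Z}$. This gives
\[
    \varphi_{r,Z}(x)\,\varphi_{r,Z}(x') = \frac{\Gamma(s)}{\Gamma(r)}\,c^{-s}\,Z^{r-s}\,e^{(c-\|x\|^2-\|x'\|^2)Z}.
\]
Next I would take the expectation against the $\mathrm{Gamma}(s,c)$ density $p(z)=\tfrac{c^s}{\Gamma(s)}z^{s-1}e^{-cz}$. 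Here the design of the feature pays off: $c^{-s}$ cancels $c^s$, $\Gamma(s)$ cancels, $Z^{r-s}$ and $z^{s-1}$ combine to $z^{r-1}$, and in the exponent the $+1$ supplied by the two $e^{Z/2}$ terms cancels the $-1$ in $e^{(c-1)Z}$ while the $+c$ there cancels the $-c$ from the rate, leaving $e^{-(\|x\|^2+\|x'\|^2)Z}$. The expectation collapses to $\tfrac{1}{\Gamma(r)}\int_0^\infty z^{r-1}e^{-(\|x\|^2+\|x'\|^2)z}\,dz$, and the integral identity above returns $(\|x\|^2+\|x'\|^2)^{-r}$. That $s$ and $c$ disappear completely is what makes the claim hold for all $s,c>0$.

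Two small points would complete the argument, and they are where whatever mild care is required actually lives. First, $\varphi_{r,Z}$ must be real and the manipulations legitimate: because $Z>0$ almost surely under the Gamma law and $c^{-s},e^{(c-1)Z},\Gamma(s),\Gamma(r)$ are all positive, the radicand is positive and the square root is real. Second, the nondegeneracy condition $\|x\|^2+\|x'\|^2>0$ (i.e.\@ $x,x'$ not both zero, consistent with the $x,x'\neq 0$ hypothesis under which $T_{DP}$ is studied) is what guarantees both that the target is finite and that the final Gamma integral converges. Beyond this bookkeeping there is no genuine obstacle; the only real insight is recognising the integral representation of $\lambda^{-r}$ and verifying that the constants and the $e^{Z/2}$ factors are tuned so that the $\mathrm{Gamma}(s,c)$ reweighting is exact.
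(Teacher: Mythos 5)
Your proof is correct and takes essentially the same route as the paper's: both rest on the Gamma-integral representation $(a+b)^{-r}=\frac{1}{\Gamma(r)}\int_0^\infty t^{r-1}e^{-(a+b)t}\,dt$ reweighted by the $\mathrm{Gamma}(s,c)$ density, with the paper deriving the feature forward from the integral while you verify the same identity backward by expanding $\varphi_{r,Z}(x)\varphi_{r,Z}(x')$ and taking the expectation. Your closing remarks on realness of the square root and the requirement $\|x\|^2+\|x'\|^2>0$ are minor points the paper leaves implicit, not a difference in method.
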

Although independent copies of $Z$ could be combined to form an $M$-dimensional sketch,
we instead propose to use a dependent point set $Z_1,\dots,Z_M$ where each element $Z_i$ has a Gamma$(s,c)$ distribution whilst maximally covering the real line. This is a well-established Quasi-Monte Carlo (QMC) technique which generally attains
lower variance. 
We define our $M$-dimensional QMC features in the following lemma:
\begin{lemma}\label{lemma:def QMC prefactor features}
     Let $\gamma_{s,c}$ be the inverse cumulative distribution function of a $\mathrm{Gamma}(s,c)$ random variable.
     Fix $M,r\in\mathbb{N}$, $u\in(0,1)$, $c,s>0$ and let $u_i = u + i/M - \lfloor u + i/M\rfloor$ for $i=1,\dots,M$.
     Define $\phi_{u,r}(x) = (\phi_{u,r,1}(x),\dots,\phi_{u,r,M}(x))$, where:
     \begin{equation}
         \phi_{u,r,i}(x) =\frac{1}{\sqrt M}\sqrt{\frac{c^{-s}\Gamma(s)}{\Gamma(r)} } e^{-(\|x\|^2-c/2)\gamma_{s,c}(u_i) }(\gamma_{s,c}(u_i))^{(r-s)/2}\ .
     \end{equation}
     If $u\sim\mathcal U(0,1)$ then $\phi_{u,r}(x)$ forms unbiased random features of the prefactor $(\|x\|^2 + \|x'\|^2)^{-r}$.
\end{lemma}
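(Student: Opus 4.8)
The plan is to reduce the $M$-dimensional QMC feature map to the scalar feature map of \cref{scalar prefactor features lemma}, and then to exploit the fact that a stratified circular shift preserves the uniform distribution. First I would verify the purely algebraic identity $\phi_{u,r,i}(x) = \tfrac{1}{\sqrt M}\varphi_{r,Z_i}(x)$, where $Z_i := \gamma_{s,c}(u_i)$ and $\varphi_{r,Z}$ is the scalar feature of \cref{scalar prefactor features lemma}. This follows by pulling the factor $e^{(c-1)Z}$ out of the square root in $\varphi_{r,Z}$ as $e^{(c-1)Z/2}$ and combining it with the leading $e^{(1/2-\|x\|^2)Z}$; the two exponents collapse to $(c/2-\|x\|^2)Z$, which is exactly the exponent appearing in $\phi_{u,r,i}$, while the remaining polynomial and constant factors match term by term.

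Given this identity, the inner product of the QMC features expands as
\eq{
\phi_{u,r}(x)\cdot\phi_{u,r}(x') = \frac1M\sum_{i=1}^M \varphi_{r,Z_i}(x)\,\varphi_{r,Z_i}(x'),
}
so by linearity of expectation it suffices to show that each summand has expectation $(\|x\|^2+\|x'\|^2)^{-r}$ when $u\sim\mathcal U(0,1)$. The crucial observation is that, for every fixed $i$, the wrapped point $u_i = u+i/M-\lfloor u+i/M\rfloor$ is itself marginally uniform on $(0,1)$: the map $u\mapsto u+i/M \bmod 1$ is a rotation of the circle $\R/\Z$ and hence preserves Lebesgue measure. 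Consequently $u_i\sim\mathcal U(0,1)$, and by inverse-transform sampling $Z_i=\gamma_{s,c}(u_i)$ is marginally $\mathrm{Gamma}(s,c)$. Applying \cref{scalar prefactor features lemma} to each term then yields $\E_u[\varphi_{r,Z_i}(x)\,\varphi_{r,Z_i}(x')]=(\|x\|^2+\|x'\|^2)^{-r}$, and averaging over the $M$ terms gives the claimed unbiasedness.

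The main obstacle is conceptual rather than computational: one must recognise that unbiasedness depends only on the marginal law of each $Z_i$ and not on the joint law of the point set. The QMC construction deliberately makes $Z_1,\dots,Z_M$ dependent — that dependence is precisely what lowers the variance relative to i.i.d.\ sampling — but since expectation is linear, these correlations are irrelevant to the mean, and only the preserved uniform marginals matter. The one technical point needing genuine care is the measure-preservation of the stratified shift, which is the standard fact that the fractional part of a uniformly shifted variable is again uniform on $(0,1)$.
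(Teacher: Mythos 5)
Your proof is correct and takes essentially the same route as the paper: the paper likewise notes that each wrapped point $u_i$ is marginally $\mathcal U(0,1)$ and concludes by linearity of expectation, its change-of-variables integral identity being exactly your reduction to the scalar features of \Cref{scalar prefactor features lemma} under the inverse-CDF substitution $Z_i=\gamma_{s,c}(u_i)$. Your explicit observation that the deliberate dependence among $Z_1,\dots,Z_M$ is irrelevant to the mean is the same point the paper makes implicitly, and your algebraic identity $\phi_{u,r,i}(x)=\tfrac{1}{\sqrt M}\varphi_{r,Z_i}(x)$ checks out.
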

Although the random features are unbiased for all $s,c>0$, the value of these parameters will impact the error.
We show that if $s,c$ are suitably tuned, then the relative error can be bounded:
 \begin{lemma} \label{QMC error}
Let $x^{(1)},\dots,x^{(n)}\in \R^d$ with $\frac{\min_i \|x^{(i)}\|^2}{\max_i \|x^{(i)}\|^2} \ge \zeta$, and fix $u\in[0,1]$. Define the relative error
\numeq{ \label{error matrix}
E_{i,j} = \frac{   \phi_{u,r}(  x^{(i)} )\cdot \phi_{u,r}( x^{(j)} ) - (\| x^{(i)} \| ^2+ \| x^{(j)} \| ^2)^{-r} }{( \| x^{(i)} \| ^2+ \| x^{(j)} \| ^2)^{-r}} .
}
If $c = 2\zeta^2$, $s=r\zeta$, then 
for some constant $C$ independent of $r$
this error satisfies, 
\numeq{
\max_{1\le i,j \le n} | E_{i,j}| \leq \frac{2}{M} \frac{\Gamma(r\zeta)\zeta^{-r\zeta}}{\Gamma(r)} (r/e)^{r(\zeta-1)} (1.3)^r \leq C (M\zeta)^{-1}.
}
\end{lemma}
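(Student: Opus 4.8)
My plan is to recognize the feature inner product $\phi_{u,r}(x^{(i)})\cdot\phi_{u,r}(x^{(j)})$ as a one-dimensional, randomly shifted equispaced quadrature rule applied to the integral representation of the prefactor. Writing $t=\|x^{(i)}\|^2+\|x^{(j)}\|^2$, the unbiasedness established in Lemma~\ref{scalar prefactor features lemma} rests on the Gamma integral $t^{-r}=\frac{1}{\Gamma(r)}\int_0^\infty z^{r-1}e^{-tz}\,dz$. Substituting $z=\gamma_{s,c}(u)$, which makes $u$ uniform on $[0,1]$ whenever $Z\sim\mathrm{Gamma}(s,c)$, rewrites this as $t^{-r}=\int_0^1 g(u)\,du$ with $g(u)=\tilde g(\gamma_{s,c}(u))$ and $\tilde g(z)=\frac{\Gamma(s)}{c^s\Gamma(r)}\,z^{r-s}e^{-(t-c)z}$, while by Lemma~\ref{lemma:def QMC prefactor features} the feature inner product equals exactly the equispaced average $\frac1M\sum_{i=1}^M g(u_i)$. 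Hence $E_{i,j}=t^r\bigl(\frac1M\sum_i g(u_i)-\int_0^1 g\bigr)$ is precisely $t^r$ times a scalar quadrature error, and the entire problem reduces to estimating that error uniformly over all admissible pairs.

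The key analytic object is the transformed integrand $\tilde g$, which up to a constant is a $\mathrm{Gamma}(r-s+1,\,t-c)$ density shape: it is log-concave, unimodal, and (since $s=r\zeta<r$ for $\zeta<1$) vanishes at both endpoints, so its total variation equals $2\max_u g$. Its maximum is attained at $z^\star=(r-s)/(t-c)$, giving
\[
\max_u g=\tilde g(z^\star)=\frac{\Gamma(s)}{c^s\,\Gamma(r)}\,e^{-(r-s)}\Bigl(\tfrac{r-s}{t-c}\Bigr)^{r-s}.
\]
A quadrature estimate based on the star discrepancy of the shifted lattice (at most $1/M$) then controls the error by a multiple of this peak, so the relative error is governed by the peak-to-integral ratio $R:=t^r\tilde g(z^\star)$. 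The next step is to substitute $s=r\zeta$, $c=2\zeta^2$, giving $R$ proportional to $\frac{\Gamma(r\zeta)}{\Gamma(r)}(2\zeta^2)^{-r\zeta}e^{-r(1-\zeta)}(r(1-\zeta))^{r(1-\zeta)}\,\frac{t^r}{(t-2\zeta^2)^{r(1-\zeta)}}$, and to bound the $t$-dependent factor. The choice $c=2\zeta^2$ is exactly the value that matches the sampling density $\mathrm{Gamma}(s,c)$ to the integrand at the extreme pair $t=t_{\min}$; using the hypothesis $\min_i\|x^{(i)}\|^2/\max_i\|x^{(i)}\|^2\ge\zeta$ to pin down the admissible range of $t$ (after the implicit scale normalization under which $c$ is calibrated) and monotonicity in $t$ to locate the worst case, this factor contributes at most a term of order $(1.3)^r$.

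To finish, I would collapse the $r$-dependence. Applying Stirling's formula to $\Gamma(r\zeta)/\Gamma(r)$ and combining it with the $\zeta^{-r\zeta}$ and the competing $(r/e)$-powers reduces the bracketed expression to the displayed form $\frac{\Gamma(r\zeta)\zeta^{-r\zeta}}{\Gamma(r)}(r/e)^{r(\zeta-1)}(1.3)^r$, and a second application of Stirling shows this is bounded by a constant multiple of $\zeta^{-1}$, absorbing everything into $C(M\zeta)^{-1}$ with $C$ independent of $r$. The main obstacle is obtaining the \emph{sharp}, $r$-controlled constant rather than a crude one: a naive bound that merely discards structure produces a factor growing in $r$, and the decay in the stated estimate can only be recovered by exploiting that $g$ is a ratio of two Gamma densities---an almost-exact importance sampler---so that the engineered parameters $s=r\zeta$, $c=2\zeta^2$ keep the peak-to-integral ratio essentially bounded across every admissible pair. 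Verifying that the $t$-dependent factor never exceeds $(1.3)^r$ over the full range of norms, and tracking the Stirling remainder terms so that $C$ is genuinely $r$-free, is where the delicate bookkeeping lies.
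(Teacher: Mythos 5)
Your proposal is correct and follows essentially the same route as the paper's proof: the same Gamma integral representation and change of variables $z=\gamma_{s,c}(u)$, the same total-variation quadrature bound $V(f)/M = 2\max f/M$ for the unimodal transformed integrand with peak at $z^\star=(r-s)/(t-c)$, the same endpoint analysis over $t\in[2\zeta,2]$ after substituting $s=r\zeta$, $c=2\zeta^2$, and the same small-$\zeta$ Gamma asymptotics to absorb everything into $C(M\zeta)^{-1}$. The only cosmetic differences are that the paper locates the worst case via log-convexity of $(1-c/t)^{s-r}(c/t)^{-s}$ in $c/t$ (taking the maximum of the two endpoint values and bounding it by $\rho^r$ with $\rho=\max_{0\le\zeta\le1}\bigl(\tfrac{1-\zeta^2}{1-\zeta}\bigr)^{\zeta-1}\zeta^{-\zeta}\approx 1.2$) where you invoke monotonicity in $t$, and that it uses $\Gamma(x)\sim 1/x$ as $x\to 0$ rather than Stirling for the final uniform-in-$r$ constant.
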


Together, these lemmas suggest random features for the prefactor can be created
by first estimating $\zeta$ (the minimum ratio of norms of input vectors),
then using the random features from \Cref{lemma:def QMC prefactor features} with the
values of $s,c$ specified in \Cref{QMC error}.

\subsection{A framework to produce random features for $T_{DP}$}
\label{sec:tdp-rf}

There are straightforward rules for producing random features for sums and products of kernels whose individual random features are known \citep[sec.\@ 2.6.2]{duvenaud_2014}.
Random features for kernels $k_1,k_2$ can be \emph{concatenated} (denoted $\oplus$) to form random features for the sum kernel $k_1+k_2$,
while their \emph{tensor product}\footnote{
For two vectors $x\in\R^{d_1}$ and $y\in\R^{d_2}$, define the tensor product $x\otimes y = \text{vec}(xy^T) \in \R^{d_1d_2}$.
} (denoted $\otimes$)
forms random features for the product kernel $k_1\times k_2$.
Our strategy to produce features for $T_{DP}$ is
to combine random features for the ``prefactor'' (presented in \cref{ssec:prefactor-rf})
with random features for the polynomial kernel to produce random features for $T_{DP}$'s power series (equation~\ref{eqn:tanidot-power-series}) truncated at $R$ terms.

Fix $R\in\mathbb{N}$, and for $r=1,\ldots,R$, let $\phi_r$ be a $m_r$-dimensional random features map for the prefactor
$\textstyle (\|x\|^2 + \|x'\|^2)^{-r}$ and let $\psi_r$ be a $m'_r$-dimensional random features map for
$\left(x\cdot x'\right)^r$. The function:
\begin{equation}\label{naive tensor product rfs}
    \tilde\Phi_R(x) = \oplus_{r=1}^R \left[\phi_r(x) \otimes \psi_r(x)\right]
\end{equation}
is therefore a random feature estimate for $T_{DP}$'s power series, truncated at $R$ terms.
Unfortunately, these random features have dimension 
$M=\sum_{r=1}^R m_r m'_r$
which depends on the \emph{product} of the random features dimension of $\phi_r$ and $\psi_r$. Furthermore, the dimension $m_r'$ of the random features $\psi_r$ required to approximate the polynomial kernel $(x\cdot x')^r$ with good accuracy can scale poorly with $r$.
For even modest values of $m_r,m'_r$ the resulting value of $M$ will likely be prohibitively large.

To remedy this, we turn to recent works which propose powerful linear maps
to approximate tensor products
with a lower-dimensional vector.
Assuming $x^{(1)},y^{(1)}\in\R^{d_1}$ and $x^{(2)},y^{(2)}\in\R^{d_2}$,
these maps are effectively random matrices $\Pi\in \R^{m\times (d_1d_2)}$,
which exhibit a \emph{subspace embedding property} whereby $[\Pi(x^{(1)}\otimes x^{(2)})]\cdot[\Pi(y^{(1)}\otimes y^{(2)})]$ concentrates sharply around $(x^{(1)}\otimes x^{(2)})\cdot(y^{(1)}\otimes y^{(2)})$.
Critically, the product $\Pi(x^{(1)}\otimes x^{(2)})$ can be computed \emph{without} instantiating either matrix $\Pi$ or the tensor product $x^{(1)}\otimes x^{(2)}$.
Examples of such methods include
\textsc{TensorSketch} and \textsc{TensorSRHT}
\citep{pagh2013compressed,pham2013fast, ahle2020oblivious},
but for generality we will simply refer to these methods as \textsc{Sketch}.
Defining a series of sketches $\textsc{Sketch}_r:\R^{m_r}\times \R^{m'_r}\mapsto \R^{\tilde{m}_r}$
we can modify $\tilde\Phi_R$ from equation~\ref{naive tensor product rfs} into:
\begin{equation}\label{sketch tensor product rfs}
    \Phi_R(x) = \oplus_{r=1}^R \textsc{Sketch}_r\left[\phi_r(x),  \psi_r(x)\right]
\end{equation}
which has output dimension $M=\sum_{r=1}^R \tilde{m}_r$, i.e.\@ without any pathological dependencies on the dimensions of $\phi_r(x),\psi_r(x)$.
However, because these features approximate a \emph{truncated} power series, they will be biased downward 
due to the monotonicity of the power series (\ref{eqn:tanidot-power-series}).
We propose two bias correction techniques to potentially improve empirical accuracy.
One approach is to normalize the random features such that $\Phi(x)\cdot\Phi(x)=1 = T_{DP}(x,x)$ for all $x\in\R^d$, i.e., such that the diagonal entries of the kernel matrix $K$ are estimated exactly.
A second approach is based on sketching the residual of the power series.
These approaches are presented in detail in Appendix~\ref{apdx:bias-correction}.

\subsection{Implementing the random features}\label{ssec:implementing tdp rf}

Instantiating the random features from the previous subsection (\ref{sketch tensor product rfs})
requires making concrete choices for $R$, $\textsc{Sketch}_r,m_r,m'_r,\tilde{m}_r,\phi_r,\psi_r$ for all $r$,
and choosing a bias correction technique.
There are many reasonable choices for $\textsc{Sketch}_r$, such as
\textsc{TensorSketch} \citep{pham2013fast}
and \textsc{TensorSRHT} \citep{ahle2020oblivious}.
These sketches generally allow $m_r,m'_r,\tilde{m}_r$ to be chosen freely
(although naturally error will increase as $\tilde{m}_r$ decreases).
Many of these sketches can also be used as random features for the polynomial kernel $\psi_r$ \citep{wacker2022improved},
either directly or as part of more complex algorithms like  \textsc{TreeSketch} \citep{ahle2020oblivious}
or complex-to-real sketches \citep{wacker2023complextoreal}.
The QMC random features from section~\ref{ssec:prefactor-rf} can be used for the prefactor $\phi_r$,
with the parameters $s,c$ chosen based on the anticipated norms of the input vectors.

The only remaining inputs are $R$ (the number of power series terms to approximate)
and $\tilde{m}_1,\ldots,\tilde{m}_R$ (how many random features to use for each term).
Assuming a fixed dimension $M$ for the final random features,
this choice involves a bias variance trade-off,
as a higher value of $R$ will reduce bias but require each term in the power series
to have fewer features, thereby increasing variance.
Intuitively, because the terms of the power series decrease monotonically
the variance of terms for small $r$ is likely to dominate the overall variance,
and therefore we surmise that a \emph{decreasing} sequence for $\{\tilde{m}_r\}_{r=1}^R$
will be the best choice.
Ultimately however we do not have theoretical results to dictate this choice in practice.
We will evaluate these choices empirically in section~\ref{sec:experiments}.

\subsection{Asymptotic error bound for $T_{DP}$ random features}\label{sec:tdp-error-bound}

Because many kernel methods use kernel matrices as linear operators,
it is natural to examine the error of kernel approximations in the operator norm.
Previous works have produced asymptotic error bounds
of the random feature dimension $m$ required to achieve a
relative approximation error of $\varepsilon$ in the operator norm.
Defining $\tilde{\text{sr}}(K) = \Tr(K)/\|K\|_\text{op}$,
\citet{cohen2015optimal} show that $m=\tilde\Omega(\tilde{\text{sr}}(K)/\varepsilon^2)$
is essentially optimal for data-oblivious random features of linear kernels, even though it is possible to eliminate logarithmic factors.
Our main theoretical result is that with the correct choices of base random features,
the random features for $T_{DP}$ presented in section~\ref{sec:tdp-rf} achieve similar scaling.
We now state this as a theorem.

\begin{theorem} \label{error bound theorem}
For any $n\ge 1$, let $x^{(1)},\dots,x^{(n)}\in\R^d$ be a set of inputs with $\frac{\min_i \|x^{(i)}\|^2}{\max_i \|x^{(i)}\|^2} \geq \zeta$. Let $K$ be the matrix with entries $K_{i,j} = T_\text{DP}(x^{(i)},x^{(j)})$. For all $\varepsilon>0$, there exists an oblivious sketch $\Phi:\R^d \to \R^m$ with $m=\tilde\Omega(\tilde{\text{sr}}(K) /\varepsilon^{2})$, such that 
\begin{equation}
\Pr_{\Phi}\big(\|\widehat K-K\|_\text{op} \ge\varepsilon \|K\|_\text{op}\big) \leq \frac{1}{\text{poly}(n)}
\end{equation}
where $\widehat K_{i,j} = \Phi(x^{(i)})\cdot \Phi(x^{(j)})$. Furthermore, the sketch can be computed in time $\tilde O(\tilde{\text{sr}}(K)n\varepsilon^{-2}+\text{nnz}(X)\varepsilon^{-2}+n\zeta^{-1}\varepsilon^{-3})$.  %
\end{theorem}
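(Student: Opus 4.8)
The plan is to split the total operator-norm error through the triangle inequality into a \emph{truncation} error and a \emph{sketching} error,
\[
\|\widehat K - K\|_{\text{op}} \le \|\widehat K - K_R\|_{\text{op}} + \|K_R - K\|_{\text{op}},
\]
where $K_R$ is the Gram matrix of the power series of \Cref{thm:tdp-is-pd} truncated at $R$ terms, i.e.\ $K_R = \sum_{r=1}^R K^{(r)}$ with $K^{(r)}_{i,j} = (x^{(i)}\cdot x^{(j)})^r(\|x^{(i)}\|^2+\|x^{(j)}\|^2)^{-r}$. I would bound the two pieces separately, choose $R$ and the per-term sketch dimensions $\tilde m_r$ so each piece is at most $\tfrac{\varepsilon}{2}\|K\|_{\text{op}}$, and then union-bound the failure probabilities.

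For the truncation error I would use that each $K^{(r)}$ is a Hadamard product of the positive definite polynomial kernel matrix $P^{(r)}$ (entries $(x^{(i)}\cdot x^{(j)})^r$) and the positive definite prefactor matrix $G^{(r)}$, hence PSD by the Schur product theorem; therefore $K - K_R = \sum_{r>R}K^{(r)} \succeq 0$ and $\|K - K_R\|_{\text{op}} \le \Tr(K-K_R)$. Since $x\cdot x' \le \tfrac12(\|x\|^2+\|x'\|^2)$, every term is at most $2^{-r}$ entrywise and exactly $2^{-r}$ on the diagonal, so $\Tr(K-K_R) = n\,2^{-R}$. Because $\Tr(K)=n$ (the diagonal is all ones) we have $\|K\|_{\text{op}}\ge 1$, so taking $R = O(\log(n/\varepsilon))$ forces $\|K-K_R\|_{\text{op}} \le \tfrac{\varepsilon}{2}\|K\|_{\text{op}}$; this $R$ is polylogarithmic and is absorbed into the $\tilde\Omega$.

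For the sketching error I would invoke the spectral-approximation (subspace-embedding) guarantees of the chosen \textsc{Sketch} (e.g.\ \textsc{TensorSRHT}) applied to each term of \Cref{sketch tensor product rfs} with ridge parameter $\lambda = \|K\|_{\text{op}}$. The required dimension is governed by the statistical dimension $s_\lambda(K^{(r)})$; since $K \succeq K^{(r)}$ we get both $\|K^{(r)}\|_{\text{op}} \le \|K\|_{\text{op}}$ and $\Tr(K^{(r)}) \le \Tr(K)=n$, hence $s_\lambda(K^{(r)}) \le \Tr(K^{(r)})/\lambda \le n/\|K\|_{\text{op}} = \tilde{\text{sr}}(K)$. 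This yields per-term dimension $\tilde m_r = \tilde O(\tilde{\text{sr}}(K)/\varepsilon^2)$ achieving per-term error $\tfrac{\varepsilon}{2R}\|K\|_{\text{op}}$; summing the concatenated blocks gives $\|\widehat K_R - K_R\|_{\text{op}} \le \tfrac{\varepsilon}{2}\|K\|_{\text{op}}$ with total dimension $M=\sum_{r=1}^R \tilde m_r = \tilde O(\tilde{\text{sr}}(K)/\varepsilon^2)$, and a union bound over the $R = \tilde O(1)$ terms keeps the overall failure probability at $1/\text{poly}(n)$.

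The main obstacle, I expect, is replacing the \emph{exact} prefactor matrices $G^{(r)}$ by the QMC features of \Cref{QMC error}, which only guarantee a uniform \emph{entrywise relative} error $\delta$, and converting this into an operator-norm error on $K^{(r)} = G^{(r)}\circ P^{(r)}$. A naive entrywise-to-operator-norm bound (e.g.\ via row sums) would lose a factor of $n$; the fix is to exploit that both $G^{(r)}$ and its QMC estimate $\widehat G^{(r)}$ are PSD and that the map $X \mapsto X\circ P^{(r)}$ is positive, so that the relative error can be pushed through the Schur product to bound $\|\widehat G^{(r)}\circ P^{(r)} - K^{(r)}\|_{\text{op}}$ by $\tilde O(\delta)\|K\|_{\text{op}}$ rather than $n\delta\|K\|_{\text{op}}$. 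Choosing $\delta = \tilde O(\varepsilon)$ then requires $M_{\text{pre}} = \tilde O((\zeta\varepsilon)^{-1})$ prefactor samples by \Cref{QMC error}, which is precisely what produces the $n\zeta^{-1}\varepsilon^{-3}$ runtime term; the remaining runtime contributions, namely $\tilde O(\tilde{\text{sr}}(K)\,n\,\varepsilon^{-2})$ from applying the sketches to $n$ points and $\tilde O(\text{nnz}(X)\varepsilon^{-2})$ from the input-sparsity cost of the tensor sketch, then assemble into the stated bound.
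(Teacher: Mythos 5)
Your proposal is correct and follows essentially the same route as the paper's proof: truncate the power series at $R=\tilde O(\log(\tilde{\text{sr}}(K)/\varepsilon))$, convert the uniform entrywise QMC prefactor error of \Cref{QMC error} (with dimension $\tilde O(R\zeta^{-1}\varepsilon^{-1})$) into an operator-norm bound via a Hadamard-multiplier argument --- the paper makes your ``push the relative error through the Schur product'' step rigorous by writing $\tilde K^r - K^r = E\odot K^r$ and invoking Corollary 11 of Ando (1987) to get $\|E\odot K^r\|_\text{op}\le \max_i|E_{i,i}|\,\|K^r\|_\text{op}$ --- and control the per-term sketching error by spectral subspace-embedding guarantees at dimension $\tilde O(\tilde{\text{sr}}(K)/\varepsilon^2)$ per term with a union bound, instantiated concretely as \textsc{TreeSketch} with \textsc{OSNAP} leaves and \textsc{TensorSRHT} internal nodes (Lemmata 32--34 of Ahle et al.), which is precisely what delivers the $\text{nnz}(X)\varepsilon^{-2}$ runtime term that plain \textsc{TensorSRHT} would not. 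The only substantive differences are presentational or small: the paper uses a three-term triangle inequality from the outset (sketch error, prefactor error, truncation) rather than folding the prefactor substitution in as an afterthought, bounds the tail by a Frobenius-norm estimate $\|K^r\|_\text{op}\le n2^{-r/2}$ where your PSD trace argument $\|K-K_R\|_\text{op}\le\Tr(K-K_R)=n2^{-R}$ is cleaner, and it needs a norm-transfer lemma ($\|A_r\|_F^2\le(1+\varepsilon)\Tr(K)$ and $\|A_r\|_\text{op}^2\le(1+\varepsilon)\|K\|_\text{op}$) because the embedding guarantee must be applied to the QMC-perturbed feature matrices $A_r$ rather than to the exact $K^r$ to which you apply your statistical-dimension bound.
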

The random features in the theorem follow equation~\ref{sketch tensor product rfs},
with specific choices for $M$,$R$, and $\{\textsc{Sketch}_r,\tilde{m}_r,\phi_r,\psi_r\}_{r=1}^R$
given in the proof in Appendix~\ref{apdx:proof-of-error-bound}.
\Cref{error bound theorem} essentially suggests that, with the correct settings,
the error of the random features proposed in section~\ref{sec:tdp-rf}
scales as well as one could reasonably expect for a kernel of this type.
We would highlight that the computational cost of the sketch is sub-quadratic in $n$, and compares favourably with the cost of data-dependent low-rank approximation methods.

\section{Related work}\label{sec:related-work}

Our work on random features fits into a large body
of literature random features
for kernels \citep{liu2021random}.
The majority of work in this area
focuses on stationary kernels (i.e.\@ $k(x,x')=f(x-x')$),
because the Fourier transform can be applied
to any stationary kernel to produce random features
in a systematic way \citep{rahimi2007random}.
There is however no analogous universal formula to produce
random features for non-stationary kernels like $T_{MM}$ and $T_{DP}$;
therefore each kernel requires a bespoke approach.
Although our random features are novel, they build upon ideas present in prior works.
Our random features for $T_{MM}$ critically rely on previously-proposed random hashes for $T_{MM}$.
Our approach to create random features for $T_{DP}$ via approximating its power series follows was inspired
by the random features for the Gaussian kernel from \citet{cotter2011explicit},
which were subsequently improved upon by \citet{ahle2020oblivious}.
Similar techniques have also been used to create random features for the neural tangent kernel \citep{zandieh2021scaling}.
However, to the best of our knowledge no prior
works have proposed random features specifically for the Tanimoto
kernel or its variants.

Other works have proposed other types of scalable approximations for Tanimoto coefficients which are not based on random features.
\citet{haque2010scissors} propose SCISSORS, an optimization-based approach to estimate Tanimoto coefficients which is akin to a data-dependent sketch.
A large number of works use hash-based techniques to find approximate nearest neighbours with the Tanimoto distance metric
\citep{nasr2010hashing,kristensen2011using,tabei2011sketchsort,anastasiu2017efficient}.
Although these techniques are useful for information retrieval,
unlike random features they cannot be used to directly scale kernel methods to larger datasets.

\section{Experiments}\label{sec:experiments}

In this section we apply the techniques in this paper to realistic datasets of molecular fingerprints.
All experiments were performed in python using the
\texttt{numpy} \citep{harris2020array},
\texttt{pytorch} \citep{paszke2019pytorch},
\texttt{gpytorch} \citep{gardner2018gpytorch},
and \texttt{rdkit} \citep{rdkit-version-2023-09-01} packages.
Molecules were represented with both binary (B) and count (C) Morgan fingerprints \citep{rogers2010extended} of dimension 1024 (additional details in Appendix~\ref{apdx datasets and fingerprints}).
These vectors indicate the presence (B) or count (C) of different subgraphs in a molecule.
Code to reproduce all experiments is available at: \codeURL{}.

\subsection{Errors of random features on real datasets}\label{sec:expt-rf-on-real-data}

\begin{figure}
    \centering
    \includegraphics{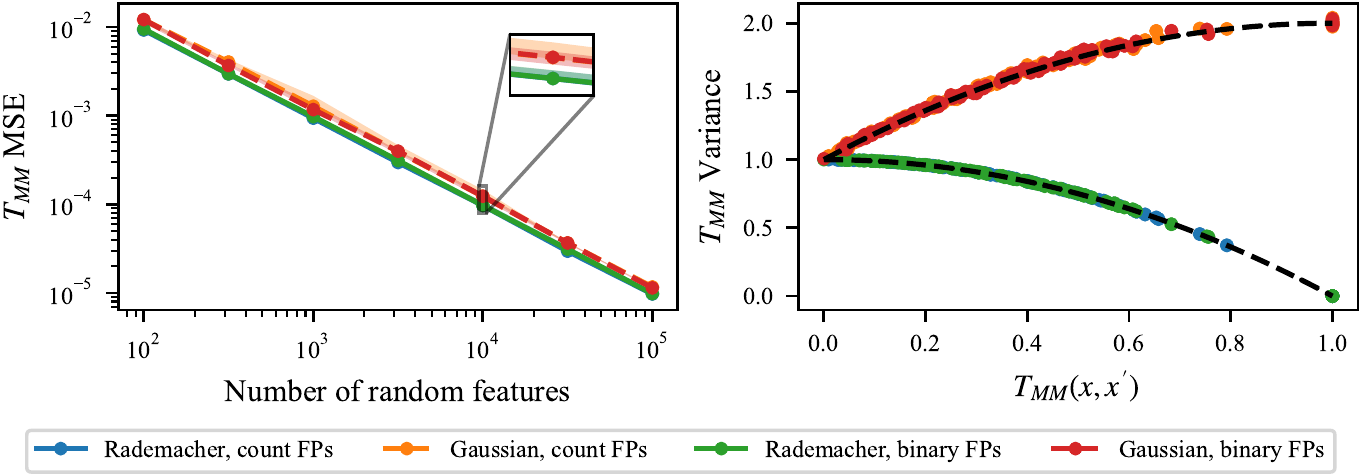}
    \caption{
    \textbf{Left:} MSE of $T_{MM}$ matrix reconstruction as a function of number of random features (median over 5 trials, shaded regions are first/third quartiles).
    \textbf{Right:} empirical variance of scalar $T_{MM}$ random feature estimates for $M=10^5$,
    closely matching theoretical predictions (dashed lines).
    }
    \label{fig:tmm-rfs}
\end{figure}

Here we study the error of approximating matrices of Tanimoto coefficients using our random features,
with the general goal of verifying the claims in sections~\ref{sec:minmax-rf}--\ref{sec:t-dp} on a realistic dataset of molecules.
We choose to study a sample of 1000
small organic molecules from the GuacaMol dataset \citep{brown2019guacamol,mendez2019chembl} %
which exemplify the types of molecules typically considered in drug discovery projects.
We use both binary (B) and count (C) fingerprints of radius 2.

First, we investigate the random features for $T_{MM}$ proposed in section~\ref{sec:minmax-rf}.
We instantiate these features using the random hash from \citet{ioffe2010improved} (explained further in Appendix~\ref{apdx:explain tmm features and hash})
with $\Xi$ both Gaussian and Rademacher distributed.
The results are shown in Figure~\ref{fig:tmm-rfs}.
The left subplot shows the median mean squared error (MSE), i.e.\@ $\mathbb{E}_{i,j} \left[\left(T_{MM}(x^{(i)}, x^{(j)})-\phi(x^{(i)})\cdot \phi(x^{(j)}) \right)^2\right]$ as a function of the random feature dimension $M$.
As expected for a Monte Carlo estimator, the square error decreases with $O(1/M)$ (i.e.\@ increasing the number of random features by 10 reduces the MSE by a factor of 10).
As predicted by Theorem~\ref{thm:general-minmax-rf},
the estimation error seems to depend only on the distribution of $\Xi$ and not on the input vectors themselves;
therefore the error curves for count and binary fingerprints overlap completely.
The error is lowest when $\Xi$ is Rademacher distributed, although the empirical difference in error seems small.
The right subplot looks at the variance across across scalar random features,
showing close matching with the predictions of Theorem~\ref{thm:general-minmax-rf}.
Overall these features behave exactly as expected.

\begin{figure}
    \centering
    \includegraphics{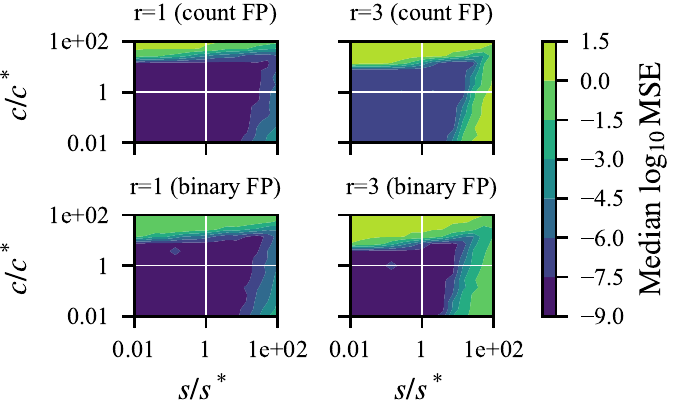}
    \includegraphics{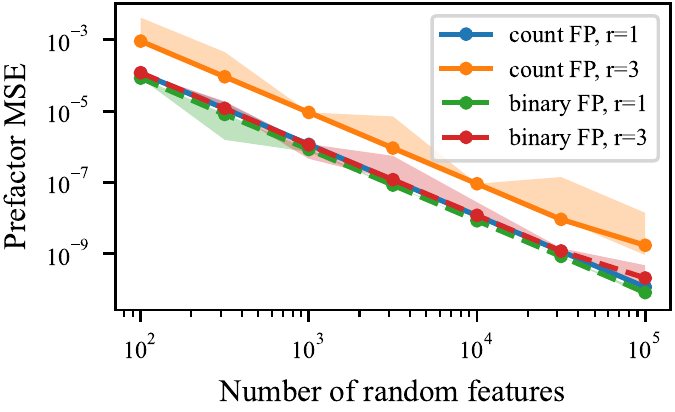}
    \caption{
    \textbf{Left:} Contour plots of MSE for prefactor random features with $M=10^4$ with varying $s,c$.
    \textbf{Right:} MSE vs number of prefactor random features with $s,c$ values from \Cref{QMC error}.
    As in Figure~\ref{fig:tmm-rfs}, lines are medians over 5 trials, shaded regions are first/third quartiles.
    }
    \label{fig:prefactor-rfs}
\end{figure}

Next, we investigate the random features for $T_{DP}$ from section~\ref{sec:t-dp}.
These features are more complex, so we start by studying the random features for the ``prefactor''
from section~\ref{ssec:prefactor-rf}.
Recall that these features had free parameters $s,c>0$.
Fixing the number of features $M=10^4$,
Figure~\ref{fig:prefactor-rfs} (left) shows the MSE for the $r=1$ and $r=3$ terms
for both the binary and count fingerprints (which have different norms) as a function of $s$ and $c$.
The values which minimize the relative error bound from \cref{QMC error},
denoted $s^*,c^*$,
seem to lie in a broad plateau of low error in all settings,
suggesting that these values of $s,c$ are a prudent choice.
Using these values of $s,c$,
Figure~\ref{fig:prefactor-rfs} (right) shows the MSE with respect to the number of features $M$.
As expected for a QMC method,
the error dependence appears to be \emph{quadratic} $O(1/M^2)$ (i.e.\@ a ten-fold increase in $M$ reduces MSE by 100-fold).

\begin{figure}
    \centering
    \includegraphics{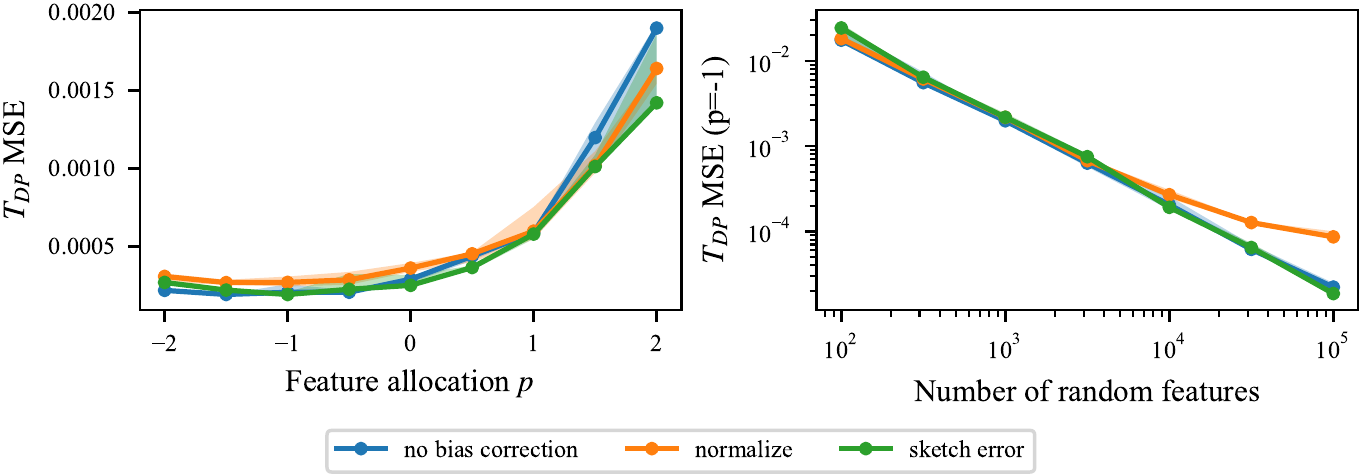}
    \caption{
    \textbf{Left:} MSE for $M=10^4$ dimensional random features when allocating features by $\tilde{m}_r\propto r^p$, $r=1.\ldots,4$.
    \textbf{Right:}  MSE of $T_{DP}$ random features using $R=4,p=-1$ and various bias correction strategies.
    Both subplots use binary fingerprints
    (the equivalent plot for count fingerprints is Figure~\ref{fig:tdp-total-rf-count-apdx}).
    As in Figure~\ref{fig:tmm-rfs}, lines are medians over 5 trials, shaded regions are first/third quartiles.
    }
    \label{fig:tdp-total-rf}
\end{figure}

Because it is implemented in \texttt{scikit-learn} \citep{scikit-learn},
we use \textsc{TensorSketch} \citep{pagh2013compressed}
both as the polynomial random feature map and to combine the polynomial and prefactor
random features.
We fix the number of random features for the prefactor to be $10^4$ (recall it can be chosen freely without impacting the final random feature dimension).
Figure~\ref{fig:tdp-polynomial-and-single-term-error} shows the MSE of approximating both $(x\cdot x')^r$
and $((x\cdot x')/(\|x\|^2+\|x'\|^2))^r$.
In both cases, the MSE decreases approximately with $O(1/M)$.
Finally, we empirically examine how to allocate $M$ random features across $R$ terms.
Using $R=4$, Figure~\ref{fig:tdp-total-rf} (left) shows that allocating most of the features to the terms with small $R$ results in lower error.
We therefore heuristically suggest allocating features according to $\tilde{m}_r\propto r^{-1}$.
Recall that truncating the power series biases the random features downward,
and in section~\ref{sec:tdp-rf} two bias correction techniques were proposed.
Figure~\ref{fig:tdp-total-rf} (right) studies the overall MSE for the plain features
and both bias correction techniques.
It appears that, in practice, neither technique is particularly helpful (normalization in fact appears harmful for large $M$).
All techniques show an error dependence of approximately $O(1/M)$.

\subsection{Molecular property prediction and uncertainty quantification}\label{sec:regression-expt}

To evaluate their efficacy in practice, we use our random features to approximate large-scale Gaussian processes (GPs) \citep{williams2006gaussian}
for molecular property prediction.
Specifically, we study 5 tasks from the \textsc{dockstring} benchmark 
which entail predicting protein binding affinity from a molecular graph structure \citep{garcia2022dockstring}.
Each task contains $250\mathrm{k}$ molecules, making exact GP regression infeasible. 
We represent molecules with count fingerprints of radius 1.

We use $M=5000$ random features for all methods.
We compare to two approximate GP baselines.
The first is an exact GP on a random subset of size $M$.
Since this approach ignores most of the dataset,
one should expect a reasonable approximate GP to perform better.
The second is a sparse variational GP (SVGP) which approximates the dataset using $M$ pseudo-data points $Z$
\citep{titsias2009variational,hensman2013gaussian}.
The locations of $Z$ are typically chosen based on the input dataset (we use K-means clustering),
making this method effectively a data-dependent sketch.
Accordingly, one might expect the performance of this approximation to be \emph{better}
than data-oblivious random features.
Details of Gaussian process training are given in \cref{apdx:regression-expt-details}.

\begin{table*}[tb]
\caption{
Average log probability of test set labels with various approximate GPs
for 5 targets from \textsc{dockstring} dataset \citep{garcia2022dockstring}.
$\pm$ values are standard deviations over 5 trials.
}
\label{tab:gp-regression-logp}
\begin{center}
\resizebox{0.95\textwidth}{!}{
\begin{sc}
\begin{tabular}
{llr@{\hspace{0.02cm}$\pm$\hspace{0.02cm}}l@{\hspace{0.30cm}}r@{\hspace{0.02cm}$\pm$\hspace{0.02cm}}l@{\hspace{0.30cm}}r@{\hspace{0.02cm}$\pm$\hspace{0.02cm}}l@{\hspace{0.30cm}}r@{\hspace{0.02cm}$\pm$\hspace{0.02cm}}l@{\hspace{0.30cm}}r@{\hspace{0.02cm}$\pm$\hspace{0.02cm}}l@{\hspace{0.30cm}}}
\toprule
Kernel & Method & \multicolumn{2}{c}{ESR2} & \multicolumn{2}{c}{F2} & \multicolumn{2}{c}{KIT} & \multicolumn{2}{c}{PARP1} & \multicolumn{2}{c}{PGR}\\
\midrule
$T_{MM}$ & Rand subset GP & -1.084 & 0.004 & -0.951 & 0.002 & -1.094 & 0.002 & -0.999 & 0.002 & -1.183 & 0.005\\
 & SVGP & -0.908 & 0.005 & -0.502 & 0.005 & -0.846 & 0.002 & -0.606 & 0.005 & -1.030 & 0.005\\
 & RFGP ($\Xi$ Rad.) & -0.954 & 0.009 & -0.658 & 0.013 & -1.222 & 0.044 & -0.968 & 0.037 & -1.127 & 0.023\\
 & RFGP ($\Xi$ Gauss.) & -0.956 & 0.010 & -0.663 & 0.015 & -1.230 & 0.048 & -0.967 & 0.036 & -1.124 & 0.025\\
\midrule
$T_{DP}$ & Rand subset GP & -1.073 & 0.002 & -0.940 & 0.001 & -1.077 & 0.002 & -0.988 & 0.001 & -1.187 & 0.006\\
 & SVGP & -0.880 & 0.004 & -0.459 & 0.002 & -0.804 & 0.002 & -0.568 & 0.002 & -1.010 & 0.004\\
 & RFGP (plain) & -0.902 & 0.003 & -0.513 & 0.004 & -0.979 & 0.015 & -0.690 & 0.022 & -1.029 & 0.002\\
 & RFGP (norm) & -0.902 & 0.003 & -0.515 & 0.003 & -0.980 & 0.015 & -0.691 & 0.022 & -1.028 & 0.002\\
 & RFGP (sketch) & -0.904 & 0.002 & -0.515 & 0.004 & -0.979 & 0.014 & -0.690 & 0.021 & -1.030 & 0.002\\
\bottomrule
\end{tabular}
\end{sc}
}
\end{center}
\end{table*}

Table~\ref{tab:gp-regression-logp} shows the average log probability of test set labels
for all types of GP with $T_{MM}$ and $T_{DP}$ kernels.
Several trends are evident.
First, for each kernel random feature GPs (RFGPs) consistently outperform random subset GPs,
but underperform SVGP.
Second, for each kernel the difference between the RFGP varieties is small
(generally less than the standard deviation).
Third, on most targets $T_{DP}$ seems to perform better than the $T_{MM}$ kernel.
The reason for this is unclear.
Similar trends can be see in the $R^2$ metric (Table~\ref{tab:gp-regression-r2}).
This suggests that the RFGPs in this paper can be used in large-scale regression,
although it seems in practice that data-dependent approximations are more accurate.

\subsection{Bayesian optimization in molecule space via Thompson sampling}\label{sec:expt-bayesopt}

Bayesian optimization (BO) uses a probabilistic surrogate model to guide optimization
and is generally considered one of the most promising techniques for sample-efficient optimization \citep{shahriari2015taking}.
Because wet-lab experiments are expensive and time-consuming,
there is considerable interest in using BO for experiment design.
Chemistry experiments are often done in large batches,
and therefore algorithms which use functions sampled from a probabilistic model are of particular interest \citep{hernandez2017parallel}.
To sample from a normal distribution $\mathcal{N}(\mu,K)$ one typically transforms i.i.d.\@ samples $Z~\sim\mathcal{N}(0,1)$ via $\mu+K^{1/2}Z$.
This requires computing $K^{1/2}$, and thereby causes exact GP sampling to scale cubically in the number of \emph{evaluation} points.
By approximating $K\approx \Phi^T \Phi$,
our random features allow for approximate sampling in \emph{linear} time.
In this section we apply this to a real-world dataset.

\begin{figure}
    \centering
    \includegraphics{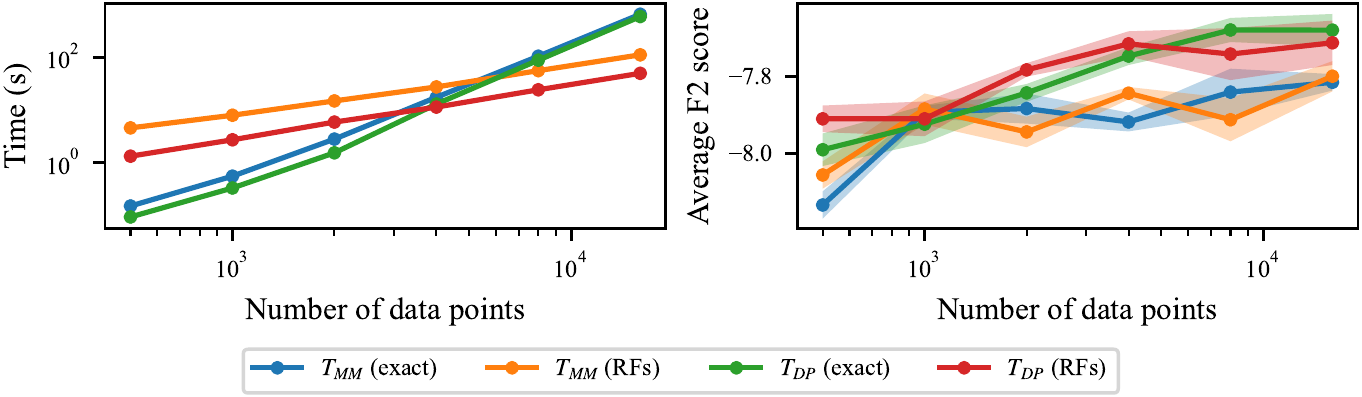}
    \caption{
        Run time and docking scores of BO using exact and approximate Thompson sampling.
        Solid lines are means over 5 trials, shaded regions are standard errors.
    }
    \label{fig:thompson-sampling-results}
\end{figure}

As a demonstration, we consider a single round of selecting 100 molecules from a random sub-sample of $n$ molecules using \emph{Thompson sampling}, a procedure for Bayesian optimization which selects molecules that maximize a function sampled from a GP prior using the $T_{MM}$ and $T_{DP}$ kernels.
Similar to the setup from section~\ref{sec:regression-expt},
we use molecules and labels from the \textsc{dockstring} dataset.
Molecules are represented as count fingerprints.
$M=5000$ random features used, with Rademacher $\Xi$ for $T_{MM}$ and no bias correction for $T_{DP}$.
All other implementation details are the same as in the previous subsection.
Figure~\ref{fig:thompson-sampling-results} (left) shows that,
as expected, exact Thompson sampling scales worse than
approximate Thompson sampling with random features.
Figure~\ref{fig:thompson-sampling-results} (right) shows that using approximate instead of exact Thompson sampling does not seem to change the average F2 docking scores of the molecules chosen.
This suggests that approximate Thompson sampling could fruitfully be applied to large datasets of molecules
in Bayesian optimization tasks.

\section{Discussion and conclusion}\label{sec:conclusion}

In this paper we presented two kinds of random features to estimate Tanimoto kernel matrices:
one based on random hashes and another based on a power series expansion.
To our knowledge, this is the first investigation into random features for the Tanimoto kernel.
We theoretically analyze their approximation quality 
and demonstrate that they can effectively approximate the Tanimoto kernel matrices on
realistic molecular fingerprint data.
In the process we discovered a new Tanimoto-like kernel over all of $\mathbb{R}^d$
which is a promising substitute for the more established $T_{MM}$
on regression and optimization tasks.

Despite promising theoretical and experimental results, 
our random features do have some limitations.
We found that it was difficult to efficiently vectorize the computation of the random features for $T_{MM}$,
making them undesirably slow to compute.
For $T_{DP}$, we were able to exhibit an error bound on the spectral norm which depends on certain choices for the base sketch and sketch dimensions; however, 
it is unclear whether these choices are optimal in practice.
Nonetheless,
in Appendix~\ref{apdx:proof-of-kernel-rank} we prove that \emph{exact} low-rank factorizations of $T_{MM}$ and $T_{DP}$ kernel matrices are not possible;
this means that follow-up works could reduce but never eliminate the approximation error.

We are most optimistic about the potential of our random features to be applied in Bayesian optimization,
in particular by enabling scalable approximate sampling from GP posteriors \citep{wilson2020efficiently}.
Although we briefly explored this technique in section~\ref{sec:expt-bayesopt},
in the future it could allow for sample-efficient Bayesian algorithms for complex tasks like Pareto frontier exploration and 
diverse optimization using the Bayesian algorithm execution framework \citep{neiswanger2021bayesian}.
These tasks are highly relevant to real-world drug discovery and there are scant new methods poised
to solve them in a sample-efficient way.
We hope that the methods presented in this paper enable impactful, large-scale applications of the Tanimoto kernel and its two extensions in chemoinformatics.\todo{If space, could talk about approximate nearest neighbour methods?}

\newpage

\begin{ack}
We thank Isaac Reid and  Zhen Ning David Liu for helpful discussions.
Austin Tripp acknowledges funding via a C~T Taylor Cambridge International Scholarship and the  Canadian Centennial Scholarship Fund.
Sukriti Singh acknowledges funding from the UK Engineering and Physical Sciences Research Council.
José Miguel Hernández-Lobato acknowledges support from a Turing AI Fellowship under grant EP/V023756/1.

\emph{Author contributions:} the initial idea of using random hashes to produce random features for the Tanimoto kernel
came from Sergio.
Austin came across \citet{ioffe2010improved} and realized his proposed hash could be used to extend the scheme to $T_{MM}$
for general non-negative vectors.
Austin derived and proved the variance statement from \Cref{thm:general-minmax-rf}.
Sergio proposed and proved \Cref{thm:tdp-is-pd} and \Cref{thm:tdp-is-metric} for the $T_{DP}$ kernel.
Austin proposed and proved \Cref{no random hash for tdp} after reading \citet{charikar2002similarity}.
Sergio developed the random features for the prefactor (section~\ref{ssec:prefactor-rf})
and the overall schema for the random features (section~\ref{sec:tdp-rf}).
Sergio proposed and proved \Cref{error bound theorem}.
All experiments were designed and performed by Austin.
Sukriti helped with some experiments which ultimately did not appear in this version of the manuscript.
Sergio and Jos{\'e} Miguel provided advising throughout the project.
Writing was done jointly, but mostly by Austin and Sergio.
\end{ack}

\bibliographystyle{apalike}
\bibliography{references}

\begin{thebibliography}{}

\bibitem[Ahle et~al., 2020]{ahle2020oblivious}
Ahle, T.~D., Kapralov, M., Knudsen, J.~B., Pagh, R., Velingker, A., Woodruff,
  D.~P., and Zandieh, A. (2020).
\newblock Oblivious sketching of high-degree polynomial kernels.
\newblock In {\em Proceedings of the Fourteenth Annual ACM-SIAM Symposium on
  Discrete Algorithms}, pages 141--160. SIAM.

\bibitem[Anastasiu and Karypis, 2017]{anastasiu2017efficient}
Anastasiu, D.~C. and Karypis, G. (2017).
\newblock Efficient identification of tanimoto nearest neighbors: All-pairs
  similarity search using the extended jaccard coefficient.
\newblock {\em International Journal of Data Science and Analytics},
  4:153--172.

\bibitem[Ando et~al., 1987]{ando1987singular}
Ando, T., Horn, R.~A., and Johnson, C.~R. (1987).
\newblock The singular values of a hadamard product: A basic inequality.
\newblock {\em Linear and Multilinear Algebra}, 21(4):345--365.

\bibitem[Bajusz et~al., 2015]{bajusz2015tanimoto}
Bajusz, D., R{\'a}cz, A., and H{\'e}berger, K. (2015).
\newblock Why is tanimoto index an appropriate choice for fingerprint-based
  similarity calculations?
\newblock {\em Journal of cheminformatics}, 7(1):1--13.

\bibitem[Bishop and Nasrabadi, 2006]{bishop2006pattern}
Bishop, C.~M. and Nasrabadi, N.~M. (2006).
\newblock {\em Pattern recognition and machine learning}, volume~4.
\newblock Springer.

\bibitem[Broder, 1997]{broder1997resemblance}
Broder, A.~Z. (1997).
\newblock On the resemblance and containment of documents.
\newblock In {\em Proceedings. Compression and Complexity of SEQUENCES 1997
  (Cat. No. 97TB100171)}, pages 21--29. IEEE.

\bibitem[Broder et~al., 1998]{broder1998min}
Broder, A.~Z., Charikar, M., Frieze, A.~M., and Mitzenmacher, M. (1998).
\newblock Min-wise independent permutations.
\newblock In {\em Proceedings of the thirtieth annual ACM symposium on Theory
  of computing}, pages 327--336.

\bibitem[Brown et~al., 2019]{brown2019guacamol}
Brown, N., Fiscato, M., Segler, M.~H., and Vaucher, A.~C. (2019).
\newblock Guacamol: benchmarking models for de novo molecular design.
\newblock {\em Journal of chemical information and modeling}, 59(3):1096--1108.

\bibitem[Cereto-Massagu{\'e} et~al., 2015]{cereto2015molecular}
Cereto-Massagu{\'e}, A., Ojeda, M.~J., Valls, C., Mulero, M.,
  Garcia-Vallv{\'e}, S., and Pujadas, G. (2015).
\newblock Molecular fingerprint similarity search in virtual screening.
\newblock {\em Methods}, 71:58--63.

\bibitem[Charikar, 2002]{charikar2002similarity}
Charikar, M.~S. (2002).
\newblock Similarity estimation techniques from rounding algorithms.
\newblock In {\em Proceedings of the thiry-fourth annual ACM symposium on
  Theory of computing}, pages 380--388.

\bibitem[Cohen et~al., 2015]{cohen2015optimal}
Cohen, M.~B., Nelson, J., and Woodruff, D.~P. (2015).
\newblock Optimal approximate matrix product in terms of stable rank.
\newblock {\em arXiv preprint arXiv:1507.02268}.

\bibitem[Cortes and Vapnik, 1995]{cortes1995support}
Cortes, C. and Vapnik, V. (1995).
\newblock Support-vector networks.
\newblock {\em Machine learning}, 20:273--297.

\bibitem[Costa, 2021]{costa2021further}
Costa, L. d.~F. (2021).
\newblock Further generalizations of the jaccard index.
\newblock {\em arXiv preprint arXiv:2110.09619}.

\bibitem[Cotter et~al., 2011]{cotter2011explicit}
Cotter, A., Keshet, J., and Srebro, N. (2011).
\newblock Explicit approximations of the gaussian kernel.
\newblock {\em arXiv preprint arXiv:1109.4603}.

\bibitem[Dara et~al., 2022]{dara2022machine}
Dara, S., Dhamercherla, S., Jadav, S.~S., Babu, C.~M., and Ahsan, M.~J. (2022).
\newblock Machine learning in drug discovery: a review.
\newblock {\em Artificial Intelligence Review}, 55(3):1947--1999.

\bibitem[David et~al., 2020]{david2020molecular}
David, L., Thakkar, A., Mercado, R., and Engkvist, O. (2020).
\newblock Molecular representations in ai-driven drug discovery: a review and
  practical guide.
\newblock {\em Journal of Cheminformatics}, 12(1):1--22.

\bibitem[Drineas et~al., 2012]{drineas2012fast}
Drineas, P., Magdon-Ismail, M., Mahoney, M.~W., and Woodruff, D.~P. (2012).
\newblock Fast approximation of matrix coherence and statistical leverage.
\newblock {\em The Journal of Machine Learning Research}, 13(1):3475--3506.

\bibitem[Drineas et~al., 2005]{drineas2005nystrom}
Drineas, P., Mahoney, M.~W., and Cristianini, N. (2005).
\newblock On the nystr{\"o}m method for approximating a gram matrix for
  improved kernel-based learning.
\newblock {\em journal of machine learning research}, 6(12).

\bibitem[Duvenaud, 2014]{duvenaud_2014}
Duvenaud, D. (2014).
\newblock Automatic model construction with gaussian processes.

\bibitem[Garc{\'\i}a-Orteg{\'o}n et~al., 2022]{garcia2022dockstring}
Garc{\'\i}a-Orteg{\'o}n, M., Simm, G.~N., Tripp, A.~J., Hern{\'a}ndez-Lobato,
  J.~M., Bender, A., and Bacallado, S. (2022).
\newblock Dockstring: easy molecular docking yields better benchmarks for
  ligand design.
\newblock {\em Journal of chemical information and modeling},
  62(15):3486--3502.

\bibitem[Gardner et~al., 2018]{gardner2018gpytorch}
Gardner, J., Pleiss, G., Weinberger, K.~Q., Bindel, D., and Wilson, A.~G.
  (2018).
\newblock Gpytorch: Blackbox matrix-matrix gaussian process inference with gpu
  acceleration.
\newblock {\em Advances in neural information processing systems}, 31.

\bibitem[Gilmer et~al., 2017]{gilmer2017neural}
Gilmer, J., Schoenholz, S.~S., Riley, P.~F., Vinyals, O., and Dahl, G.~E.
  (2017).
\newblock Neural message passing for quantum chemistry.
\newblock In {\em International conference on machine learning}, pages
  1263--1272. PMLR.

\bibitem[Gower, 1971]{gower1971general}
Gower, J.~C. (1971).
\newblock A general coefficient of similarity and some of its properties.
\newblock {\em Biometrics}, pages 857--871.

\bibitem[Granda et~al., 2018]{granda2018controlling}
Granda, J.~M., Donina, L., Dragone, V., Long, D.-L., and Cronin, L. (2018).
\newblock Controlling an organic synthesis robot with machine learning to
  search for new reactivity.
\newblock {\em Nature}, 559(7714):377--381.

\bibitem[Haque and Pande, 2010]{haque2010scissors}
Haque, I.~S. and Pande, V.~S. (2010).
\newblock Scissors: a linear-algebraical technique to rapidly approximate
  chemical similarities.
\newblock {\em Journal of chemical information and modeling}, 50(6):1075--1088.

\bibitem[Harris et~al., 2020]{harris2020array}
Harris, C.~R., Millman, K.~J., Van Der~Walt, S.~J., Gommers, R., Virtanen, P.,
  Cournapeau, D., Wieser, E., Taylor, J., Berg, S., Smith, N.~J., et~al.
  (2020).
\newblock Array programming with numpy.
\newblock {\em Nature}, 585(7825):357--362.

\bibitem[Hensman et~al., 2013]{hensman2013gaussian}
Hensman, J., Fusi, N., and Lawrence, N.~D. (2013).
\newblock Gaussian processes for big data.
\newblock In {\em Proceedings of the Twenty-Ninth Conference on Uncertainty in
  Artificial Intelligence}, pages 282--290.

\bibitem[Hern{\'a}ndez-Lobato et~al., 2017]{hernandez2017parallel}
Hern{\'a}ndez-Lobato, J.~M., Requeima, J., Pyzer-Knapp, E.~O., and
  Aspuru-Guzik, A. (2017).
\newblock Parallel and distributed thompson sampling for large-scale
  accelerated exploration of chemical space.
\newblock In {\em International conference on machine learning}, pages
  1470--1479. PMLR.

\bibitem[Ioffe, 2010]{ioffe2010improved}
Ioffe, S. (2010).
\newblock Improved consistent sampling, weighted minhash and l1 sketching.
\newblock In {\em 2010 IEEE international conference on data mining}, pages
  246--255. IEEE.

\bibitem[Jaccard, 1912]{jaccard1912distribution}
Jaccard, P. (1912).
\newblock The distribution of the flora in the alpine zone. 1.
\newblock {\em New phytologist}, 11(2):37--50.

\bibitem[Kosub, 2019]{kosub2019note}
Kosub, S. (2019).
\newblock A note on the triangle inequality for the jaccard distance.
\newblock {\em Pattern Recognition Letters}, 120:36--38.

\bibitem[Kristensen et~al., 2011]{kristensen2011using}
Kristensen, T.~G., Nielsen, J., and Pedersen, C.~N. (2011).
\newblock Using inverted indices for accelerating lingo calculations.
\newblock {\em Journal of chemical information and modeling}, 51(3):597--600.

\bibitem[Landrum et~al., 2023]{rdkit-version-2023-09-01}
Landrum, G., Tosco, P., Kelley, B., Ric, Cosgrove, D., sriniker, gedeck,
  Vianello, R., NadineSchneider, Kawashima, E., N, D., Jones, G., Dalke, A.,
  Cole, B., Swain, M., Turk, S., AlexanderSavelyev, Vaucher, A., Wójcikowski,
  M., Take, I., Probst, D., Ujihara, K., Scalfani, V.~F., guillaume godin,
  Lehtivarjo, J., Pahl, A., Walker, R., Berenger, F., jasondbiggs, and
  strets123 (2023).
\newblock rdkit/rdkit: 2023\_09\_1 (q3 2023) release beta.

\bibitem[Levandowsky and Winter, 1971]{levandowsky1971distance}
Levandowsky, M. and Winter, D. (1971).
\newblock Distance between sets.
\newblock {\em Nature}, 234(5323):34--35.

\bibitem[Liu et~al., 2021]{liu2021random}
Liu, F., Huang, X., Chen, Y., and Suykens, J.~A. (2021).
\newblock Random features for kernel approximation: A survey on algorithms,
  theory, and beyond.
\newblock {\em IEEE Transactions on Pattern Analysis and Machine Intelligence},
  44(10):7128--7148.

\bibitem[Manasse et~al., 2010]{manasse2010consistent}
Manasse, M., McSherry, F., and Talwar, K. (2010).
\newblock Consistent weighted sampling.
\newblock {\em Unpublished technical report) http://research. microsoft.
  com/en-us/people/manasse}, 2.

\bibitem[Marczewski and Steinhaus, 1958]{marczewski1958certain}
Marczewski, E. and Steinhaus, H. (1958).
\newblock On a certain distance of sets and the corresponding distance of
  functions.
\newblock In {\em Colloquium Mathematicum}, volume~6, pages 319--327. Instytut
  Matematyczny Polskiej Akademii Nauk.

\bibitem[Mendez et~al., 2019]{mendez2019chembl}
Mendez, D., Gaulton, A., Bento, A.~P., Chambers, J., De~Veij, M., F{\'e}lix,
  E., Magari{\~n}os, M.~P., Mosquera, J.~F., Mutowo, P., Nowotka, M., et~al.
  (2019).
\newblock Chembl: towards direct deposition of bioassay data.
\newblock {\em Nucleic acids research}, 47(D1):D930--D940.

\bibitem[Miranda-Quintana et~al., 2021]{miranda2021differential}
Miranda-Quintana, R.~A., Bajusz, D., R{\'a}cz, A., and H{\'e}berger, K. (2021).
\newblock Differential consistency analysis: which similarity measures can be
  applied in drug discovery?
\newblock {\em Molecular Informatics}, 40(7):2060017.

\bibitem[Nasr et~al., 2010]{nasr2010hashing}
Nasr, R., Hirschberg, D.~S., and Baldi, P. (2010).
\newblock Hashing algorithms and data structures for rapid searches of
  fingerprint vectors.
\newblock {\em Journal of chemical information and modeling}, 50(8):1358--1368.

\bibitem[Neiswanger et~al., 2021]{neiswanger2021bayesian}
Neiswanger, W., Wang, K.~A., and Ermon, S. (2021).
\newblock Bayesian algorithm execution: Estimating computable properties of
  black-box functions using mutual information.
\newblock In {\em International Conference on Machine Learning}, pages
  8005--8015. PMLR.

\bibitem[Nelson and Nguy{\^e}n, 2013]{nelson2013osnap}
Nelson, J. and Nguy{\^e}n, H.~L. (2013).
\newblock Osnap: Faster numerical linear algebra algorithms via sparser
  subspace embeddings.
\newblock In {\em 2013 ieee 54th annual symposium on foundations of computer
  science}, pages 117--126. IEEE.

\bibitem[O’Boyle and Sayle, 2016]{o2016comparing}
O’Boyle, N.~M. and Sayle, R.~A. (2016).
\newblock Comparing structural fingerprints using a literature-based similarity
  benchmark.
\newblock {\em Journal of cheminformatics}, 8(1):1--14.

\bibitem[Pagh, 2013]{pagh2013compressed}
Pagh, R. (2013).
\newblock Compressed matrix multiplication.
\newblock {\em ACM Transactions on Computation Theory (TOCT)}, 5(3):1--17.

\bibitem[Paszke et~al., 2019]{paszke2019pytorch}
Paszke, A., Gross, S., Massa, F., Lerer, A., Bradbury, J., Chanan, G., Killeen,
  T., Lin, Z., Gimelshein, N., Antiga, L., et~al. (2019).
\newblock Pytorch: An imperative style, high-performance deep learning library.
\newblock {\em Advances in neural information processing systems}, 32.

\bibitem[Pedregosa et~al., 2011]{scikit-learn}
Pedregosa, F., Varoquaux, G., Gramfort, A., Michel, V., Thirion, B., Grisel,
  O., Blondel, M., Prettenhofer, P., Weiss, R., Dubourg, V., Vanderplas, J.,
  Passos, A., Cournapeau, D., Brucher, M., Perrot, M., and Duchesnay, E.
  (2011).
\newblock Scikit-learn: Machine learning in {P}ython.
\newblock {\em Journal of Machine Learning Research}, 12:2825--2830.

\bibitem[Pham and Pagh, 2013]{pham2013fast}
Pham, N. and Pagh, R. (2013).
\newblock Fast and scalable polynomial kernels via explicit feature maps.
\newblock In {\em Proceedings of the 19th ACM SIGKDD international conference
  on Knowledge discovery and data mining}, pages 239--247.

\bibitem[Rahimi and Recht, 2007]{rahimi2007random}
Rahimi, A. and Recht, B. (2007).
\newblock Random features for large-scale kernel machines.
\newblock {\em Advances in neural information processing systems}, 20.

\bibitem[Ralaivola et~al., 2005]{ralaivola2005graph}
Ralaivola, L., Swamidass, S.~J., Saigo, H., and Baldi, P. (2005).
\newblock Graph kernels for chemical informatics.
\newblock {\em Neural networks}, 18(8):1093--1110.

\bibitem[Rogers and Hahn, 2010]{rogers2010extended}
Rogers, D. and Hahn, M. (2010).
\newblock Extended-connectivity fingerprints.
\newblock {\em Journal of chemical information and modeling}, 50(5):742--754.

\bibitem[Sch{\"o}lkopf et~al., 2002]{scholkopf2002learning}
Sch{\"o}lkopf, B., Smola, A.~J., Bach, F., et~al. (2002).
\newblock {\em Learning with kernels: support vector machines, regularization,
  optimization, and beyond}.
\newblock MIT press.

\bibitem[Shahriari et~al., 2015]{shahriari2015taking}
Shahriari, B., Swersky, K., Wang, Z., Adams, R.~P., and De~Freitas, N. (2015).
\newblock Taking the human out of the loop: A review of bayesian optimization.
\newblock {\em Proceedings of the IEEE}, 104(1):148--175.

\bibitem[Shrivastava, 2016]{shrivastava2016simple}
Shrivastava, A. (2016).
\newblock Simple and efficient weighted minwise hashing.
\newblock {\em Advances in Neural Information Processing Systems}, 29.

\bibitem[Stanley et~al., 2021]{stanley2021fs}
Stanley, M., Bronskill, J.~F., Maziarz, K., Misztela, H., Lanini, J., Segler,
  M., Schneider, N., and Brockschmidt, M. (2021).
\newblock Fs-mol: A few-shot learning dataset of molecules.
\newblock In {\em Thirty-fifth Conference on Neural Information Processing
  Systems Datasets and Benchmarks Track (Round 2)}.

\bibitem[Tabei and Tsuda, 2011]{tabei2011sketchsort}
Tabei, Y. and Tsuda, K. (2011).
\newblock Sketchsort: Fast all pairs similarity search for large databases of
  molecular fingerprints.
\newblock {\em Molecular informatics}, 30(9):801--807.

\bibitem[Tan et~al., 2016]{tan2016introduction}
Tan, P.-N., Steinbach, M., and Kumar, V. (2016).
\newblock {\em Introduction to data mining}.
\newblock Pearson Education India.

\bibitem[Tanimoto, 1958]{tanimoto1958elementary}
Tanimoto, T.~T. (1958).
\newblock Elementary mathematical theory of classification and prediction.

\bibitem[Titsias, 2009]{titsias2009variational}
Titsias, M. (2009).
\newblock Variational learning of inducing variables in sparse gaussian
  processes.
\newblock In {\em Artificial intelligence and statistics}, pages 567--574.
  PMLR.

\bibitem[Wacker et~al., 2022]{wacker2022improved}
Wacker, J., Kanagawa, M., and Filippone, M. (2022).
\newblock Improved random features for dot product kernels.
\newblock {\em arXiv preprint arXiv:2201.08712}.

\bibitem[Wacker et~al., 2023]{wacker2023complextoreal}
Wacker, J., Ohana, R., and Filippone, M. (2023).
\newblock Complex-to-real sketches for tensor products with applications to the
  polynomial kernel.
\newblock In Ruiz, F., Dy, J., and van~de Meent, J.-W., editors, {\em
  Proceedings of The 26th International Conference on Artificial Intelligence
  and Statistics}, volume 206 of {\em Proceedings of Machine Learning
  Research}, pages 5181--5212. PMLR.

\bibitem[Walters and Barzilay, 2020]{walters2020applications}
Walters, W.~P. and Barzilay, R. (2020).
\newblock Applications of deep learning in molecule generation and molecular
  property prediction.
\newblock {\em Accounts of chemical research}, 54(2):263--270.

\bibitem[Williams and Rasmussen, 2006]{williams2006gaussian}
Williams, C.~K. and Rasmussen, C.~E. (2006).
\newblock {\em Gaussian processes for machine learning}, volume~2.
\newblock MIT press Cambridge, MA.

\bibitem[Wilson et~al., 2020]{wilson2020efficiently}
Wilson, J., Borovitskiy, V., Terenin, A., Mostowsky, P., and Deisenroth, M.
  (2020).
\newblock Efficiently sampling functions from gaussian process posteriors.
\newblock In {\em International Conference on Machine Learning}, pages
  10292--10302. PMLR.

\bibitem[Xiong et~al., 2019]{xiong2019pushing}
Xiong, Z., Wang, D., Liu, X., Zhong, F., Wan, X., Li, X., Li, Z., Luo, X.,
  Chen, K., Jiang, H., et~al. (2019).
\newblock Pushing the boundaries of molecular representation for drug discovery
  with the graph attention mechanism.
\newblock {\em Journal of medicinal chemistry}, 63(16):8749--8760.

\bibitem[Zandieh et~al., 2021]{zandieh2021scaling}
Zandieh, A., Han, I., Avron, H., Shoham, N., Kim, C., and Shin, J. (2021).
\newblock Scaling neural tangent kernels via sketching and random features.
\newblock In Ranzato, M., Beygelzimer, A., Dauphin, Y., Liang, P., and Vaughan,
  J.~W., editors, {\em Advances in Neural Information Processing Systems},
  volume~34, pages 1062--1073. Curran Associates, Inc.

\end{thebibliography}

\appendix
\counterwithin{figure}{section}  %
\counterwithin{table}{section}  %

\clearpage
\section*{Outline of appendices}

\begin{itemize}
    \item Appendix~\ref{apdx:paper checklist} comments on the items in the NeurIPS Paper Checklist.
    \item Appendix~\ref{apdx:notation} summarizes the notation used in this paper.
    \item Appendix~\ref{apdx:ahle-sketch} explains \textsc{TreeSketch} from \citet{ahle2020oblivious},
        which is used in the proof of \Cref{error bound theorem}.
    \item Appendix~\ref{proofs in appendix} provides proofs for all theoretical results in the paper.
    \item Appendix~\ref{apdx:bias-correction} explains possible techniques to correct the bias of the $T_{DP}$ kernel
        which are mentioned briefly in section~\ref{sec:t-dp}.
    \item Appendix~\ref{appendix experimental details} gives additional details about the experiments from section~\ref{sec:experiments} and presents some supplementary results.
\end{itemize}

\clearpage
\section{Paper Checklist}\label{apdx:paper checklist}

Here we explicitly comment on all areas of the NeurIPS paper checklist.

\paragraph{Claims}
The key claims in this paper are the creation of random features for the Tanimoto kernel and the creation of a new continuous kernel.
The claims are mainly supported theoretically with proofs,
but we also test our random features experimentally and show that they are usable with real-world data.

\paragraph{Code of Ethics + broader impacts}
Read and acknowledged.
Our work is fairly abstract and theoretical and we do not work with human-related data,
so we do not foresee significant direct ethical impacts
(positive or negative).

\paragraph{Limitations}
Our work is primarily theoretical, so the main strength (random features with low approximation error) is also the main weakness (the approximation error could plausibly be lower).
Random features are well-studied as a general approach so the strengths and weaknesses are generally well-understood \citep{liu2021random}.
For this reason we did not include an explicit limitations section.
However, we did mention some other limitations in section~\ref{sec:conclusion}.

\paragraph{Theory}
Our key theorems have minimal assumptions which are clearly stated.
All theorems are proved in Appendix~\ref{proofs in appendix}.

\paragraph{Experiments}
Code to reproduce the experiments is available at:\newline
\codeURL{}.

\paragraph{Training details}
The experiments in this paper have minimal details and the important details are specified in Appendix~\ref{appendix experimental details}.
All unspecified details should be easy to find in the code (\codeURL{}).

\paragraph{Error bars} Our tables and figures include error bars.

\paragraph{Compute}
The compute costs of the experiments in this paper were quite modest
and were run on a single machine with no GPU usage.
The approximate computational times were:
\begin{itemize}
    \item Section~\ref{sec:expt-rf-on-real-data}: a script to produce all the plots took approximately 24\,h to run.
    The long runtime was mostly due to computing errors for many settings of $R$ and $p$.
    \item Section~\ref{sec:regression-expt}:
        for $T_{MM}$, SVGP and each RFGP inference took approximately 1\,h (total 3\,h per target).
        For $T_{DP}$, SVGP took 0.5\,h while RFGP took 0.3\,h (total 1.5\,h per target).
        Multiplying this by 5 targets and 5 trials gives a total compute time of $\approx$112\,h.
    \item Section~\ref{sec:expt-bayesopt}:
        this experiment was quite fast: total runtime was perhaps 5\,h.
\end{itemize}

\paragraph{Reproducibility}
Our contribution is reproducible via the statements of our random features
and our code.
We aim for a high standard of reproducibility:
the exact commands to reproduce the results are stated explicitly in our code's \texttt{README}
file,
and the raw data behind all plots in this paper is included alongside the code.

\paragraph{Safeguards}
We believe there is nothing high-risk which we need to safeguard.

\paragraph{Licenses}
We cite the assets which we use in the paper.

\paragraph{Assets}
We are not releasing assets.

\paragraph{Humans Subjects / IRB Approvals}
Not applicable to our paper.

\clearpage
\section{Notation}\label{apdx:notation}

We use $\|x\|$ to denote the Euclidean norm of a vector $x$, $\|A\|_\text{op}$ for the spectral norm and $\|A\|_F$ for the Frobenius norm of a matrix $A$.
$x_i$ denotes the $i$th element of a vector $x$, while $x^{(i)}$ denotes the $i$th vector in a list of vectors.
We denote $\text{nnz}(\cdot)$ the number of non-zero entries in a vector or matrix. For two functions $f, g$ we say $f(z) = O(g(z))$ if there is a constant $C$, such that $0 \le f(z) \le C g(z)$ for $z$ large enough. Similarly, $f(z) = \Omega(g(z))$ if there is a constant $C$, such that $f(z) \ge C g(z)$ for $z$ large enough. $\tilde O$ and $\tilde \Omega$ omit poly-logarithmic factors. 

Throughout the paper, we use $d$ for the dimension of input vectors, $n$ for the number of samples in the dataset, and $M$ for the dimension of the sketch or random features.  
\clearpage
\section{Definition of \textsc{TreeSketch}}
\label{apdx:ahle-sketch}

For simplicity, we shall define \textsc{TreeSketch} \citep{ahle2020oblivious} for inputs which are $r$-fold tensor products $x^{(1)}\otimes\dots\otimes x^{(r)}$ where $r$ is a power of two, and each $x^{(i)}\in\R^d$ is of the same dimension $d$.

The main ingredients will be two \emph{base} sketches, one for the leaves of the tree, and one for internal nodes. The leaf sketch, $T_\text{base}$ is a random matrix in $\R^{m\times d}$. The internal sketch, $S_\text{base}$, is a random matrix in $\R^{m\times m^2}$, which can be rapidly applied to the tensor product of two vectors in $\R^m$. It is possible to instantiate \text{TreeSketch} using simple sketches such as \textsc{CountSketch}, for the leaves, and \textsc{TensorSketch} for internal nodes. However, our theory we assumes that $T_\text{base}$ is \textsc{OSNAP} \citep{nelson2013osnap}, and $S_\text{base}$ is \textsc{TensorSHRT}, both of which enjoy a useful spectral property. 

Having picked the base sketches, we can define for any power of two $q\ge 2$, a random map $Q^q: \R^{m^q} \to \R^m$ whose action on a tensor product $v^{(1)}\otimes \dots\otimes v^{(q)}$ with $v^{(i)}\in\R^m$ is given by the following recursion:
\eq{
Q^q = Q^{q/2}(S^q_1(v^{(1)}\otimes v^{(2)})\otimes S^q_2(v^{(3)}\otimes v^{(4)})\otimes \dots\otimes S^q_{q/2}(v^{(q-1)}\otimes v^{(q)})).
}
Here, the matrices $(S^i_j)$ are independent copies of $S_\text{base}$. The action of \textsc{TreeSketch} $\Pi^r$ on $x^{(1)}\otimes\dots\otimes x^{(r)}$ is then defined by
\eq{
\Pi^r(x^{(1)}\otimes\dots\otimes x^{(r)}) = Q^r(T_1(x^{(1)})\otimes\dots \otimes T_r(x^{(r)}))
}
where the matrices $(T_i)$ are independent copies of $T_\text{base}$. Figure \ref{fig:treesketch} gives a schematic view of the computational tree for a tensor product with $r=4$. 

Whilst the action of $\Pi^r$ was only defined for tensor products, this can be extended to arbitrary vectors in $\R^{d^q}$. \citet{ahle2020oblivious} prove that the resulting linear sketch has many desirable subspace embedding properties, some of which are used in the proof of Theorem \ref{error bound theorem}.

\begin{figure}[b]
\begin{center}
    \scalebox{0.9}{
\tikzset{every picture/.style={line width=0.75pt}} %

\begin{tikzpicture}[x=0.75pt,y=0.75pt,yscale=-1,xscale=1]

\draw    (148,113.5) -- (213.09,93.1) ;
\draw [shift={(215,92.5)}, rotate = 162.6] [color={rgb, 255:red, 0; green, 0; blue, 0 }  ][line width=0.75]    (10.93,-3.29) .. controls (6.95,-1.4) and (3.31,-0.3) .. (0,0) .. controls (3.31,0.3) and (6.95,1.4) .. (10.93,3.29)   ;
\draw    (295,112.5) -- (235.91,94.09) ;
\draw [shift={(234,93.5)}, rotate = 17.3] [color={rgb, 255:red, 0; green, 0; blue, 0 }  ][line width=0.75]    (10.93,-3.29) .. controls (6.95,-1.4) and (3.31,-0.3) .. (0,0) .. controls (3.31,0.3) and (6.95,1.4) .. (10.93,3.29)   ;
\draw    (176,213.5) -- (146.61,191.69) ;
\draw [shift={(145,190.5)}, rotate = 36.57] [color={rgb, 255:red, 0; green, 0; blue, 0 }  ][line width=0.75]    (10.93,-3.29) .. controls (6.95,-1.4) and (3.31,-0.3) .. (0,0) .. controls (3.31,0.3) and (6.95,1.4) .. (10.93,3.29)   ;
\draw    (96,211.5) -- (125.27,194.5) ;
\draw [shift={(127,193.5)}, rotate = 149.86] [color={rgb, 255:red, 0; green, 0; blue, 0 }  ][line width=0.75]    (10.93,-3.29) .. controls (6.95,-1.4) and (3.31,-0.3) .. (0,0) .. controls (3.31,0.3) and (6.95,1.4) .. (10.93,3.29)   ;
\draw    (138,172.5) -- (138,149.5) ;
\draw [shift={(138,147.5)}, rotate = 90] [color={rgb, 255:red, 0; green, 0; blue, 0 }  ][line width=0.75]    (10.93,-3.29) .. controls (6.95,-1.4) and (3.31,-0.3) .. (0,0) .. controls (3.31,0.3) and (6.95,1.4) .. (10.93,3.29)   ;
\draw    (86,264.5) -- (86,244.5) ;
\draw [shift={(86,242.5)}, rotate = 90] [color={rgb, 255:red, 0; green, 0; blue, 0 }  ][line width=0.75]    (10.93,-3.29) .. controls (6.95,-1.4) and (3.31,-0.3) .. (0,0) .. controls (3.31,0.3) and (6.95,1.4) .. (10.93,3.29)   ;
\draw    (187,266.5) -- (187,246.5) ;
\draw [shift={(187,244.5)}, rotate = 90] [color={rgb, 255:red, 0; green, 0; blue, 0 }  ][line width=0.75]    (10.93,-3.29) .. controls (6.95,-1.4) and (3.31,-0.3) .. (0,0) .. controls (3.31,0.3) and (6.95,1.4) .. (10.93,3.29)   ;
\draw    (344,213.5) -- (314.61,191.69) ;
\draw [shift={(313,190.5)}, rotate = 36.57] [color={rgb, 255:red, 0; green, 0; blue, 0 }  ][line width=0.75]    (10.93,-3.29) .. controls (6.95,-1.4) and (3.31,-0.3) .. (0,0) .. controls (3.31,0.3) and (6.95,1.4) .. (10.93,3.29)   ;
\draw    (264,211.5) -- (293.27,194.5) ;
\draw [shift={(295,193.5)}, rotate = 149.86] [color={rgb, 255:red, 0; green, 0; blue, 0 }  ][line width=0.75]    (10.93,-3.29) .. controls (6.95,-1.4) and (3.31,-0.3) .. (0,0) .. controls (3.31,0.3) and (6.95,1.4) .. (10.93,3.29)   ;
\draw    (306,172.5) -- (306,149.5) ;
\draw [shift={(306,147.5)}, rotate = 90] [color={rgb, 255:red, 0; green, 0; blue, 0 }  ][line width=0.75]    (10.93,-3.29) .. controls (6.95,-1.4) and (3.31,-0.3) .. (0,0) .. controls (3.31,0.3) and (6.95,1.4) .. (10.93,3.29)   ;
\draw    (254,264.5) -- (254,244.5) ;
\draw [shift={(254,242.5)}, rotate = 90] [color={rgb, 255:red, 0; green, 0; blue, 0 }  ][line width=0.75]    (10.93,-3.29) .. controls (6.95,-1.4) and (3.31,-0.3) .. (0,0) .. controls (3.31,0.3) and (6.95,1.4) .. (10.93,3.29)   ;
\draw    (355,266.5) -- (355,246.5) ;
\draw [shift={(355,244.5)}, rotate = 90] [color={rgb, 255:red, 0; green, 0; blue, 0 }  ][line width=0.75]    (10.93,-3.29) .. controls (6.95,-1.4) and (3.31,-0.3) .. (0,0) .. controls (3.31,0.3) and (6.95,1.4) .. (10.93,3.29)   ;
\draw    (224,81.5) -- (224,58.5) ;
\draw [shift={(224,56.5)}, rotate = 90] [color={rgb, 255:red, 0; green, 0; blue, 0 }  ][line width=0.75]    (10.93,-3.29) .. controls (6.95,-1.4) and (3.31,-0.3) .. (0,0) .. controls (3.31,0.3) and (6.95,1.4) .. (10.93,3.29)   ;

\draw    (137.5, 128) circle [x radius= 18.12, y radius= 18.12]   ;
\draw (128,117.4) node [anchor=north west][inner sep=2pt]    {$S_{1}^{4}$};
\draw    (224.5, 38) circle [x radius= 18.12, y radius= 18.12]   ;
\draw (215,27.4) node [anchor=north west][inner sep=2pt]    {$S_{1}^{2}$};
\draw    (186, 228) circle [x radius= 16.97, y radius= 16.97]   ;
\draw (176,219.4) node [anchor=north west][inner sep=2pt]    {$T_{2}$};
\draw (126,171.4) node [anchor=north west][inner sep=5pt]  [font=\large]  {$\otimes $};
\draw    (85, 225.5) circle [x radius= 16.97, y radius= 16.97]   ;
\draw (75,216.9) node [anchor=north west][inner sep=2pt]    {$T_{1}$};
\draw  [fill={rgb, 255:red, 238; green, 238; blue, 238 }  ,fill opacity=1 ]  (85, 281) circle [x radius= 20, y radius= 20]   ;
\draw (76,272.4) node [anchor=north west][inner sep=2pt]    {$x^{(1)}$};
\draw  [fill={rgb, 255:red, 238; green, 238; blue, 238 }  ,fill opacity=1 ]  (186, 282) circle [x radius= 20, y radius= 20]   ;
\draw (177,273.4) node [anchor=north west][inner sep=2pt]    {$x^{(2)}$};
\draw    (305.5, 128) circle [x radius= 18.12, y radius= 18.12]   ;
\draw (296,117.4) node [anchor=north west][inner sep=2pt]    {$S_{2}^{4}$};
\draw    (354, 228) circle [x radius= 16.97, y radius= 16.97]   ;
\draw (344,219.4) node [anchor=north west][inner sep=2pt]    {$T_{4}$};
\draw (294,171.4) node [anchor=north west][inner sep=5pt]  [font=\large]  {$\otimes $};
\draw    (253, 225.5) circle [x radius= 16.97, y radius= 16.97]   ;
\draw (243,216.9) node [anchor=north west][inner sep=2pt]    {$T_{3}$};
\draw  [fill={rgb, 255:red, 238; green, 238; blue, 238 }  ,fill opacity=1 ]  (253, 281) circle [x radius= 20, y radius= 20]   ;
\draw (244,272.4) node [anchor=north west][inner sep=2pt]    {$x^{(3)}$};
\draw  [fill={rgb, 255:red, 238; green, 238; blue, 238 }  ,fill opacity=1 ]  (354, 283) circle [x radius= 20, y radius= 20]   ;
\draw (345,274.4) node [anchor=north west][inner sep=2pt]    {$x^{(4)}$};
\draw (212,80.4) node [anchor=north west][inner sep=5pt]  [font=\large]  {$\otimes $};
\end{tikzpicture}
}
    \caption{Schematic view of the computational tree for \textsc{TreeSketch} for a tensor product of four vectors $x^{(1)}\otimes\dots\otimes x^{(4)}$.}
    \label{fig:treesketch}
\end{center}
\end{figure}
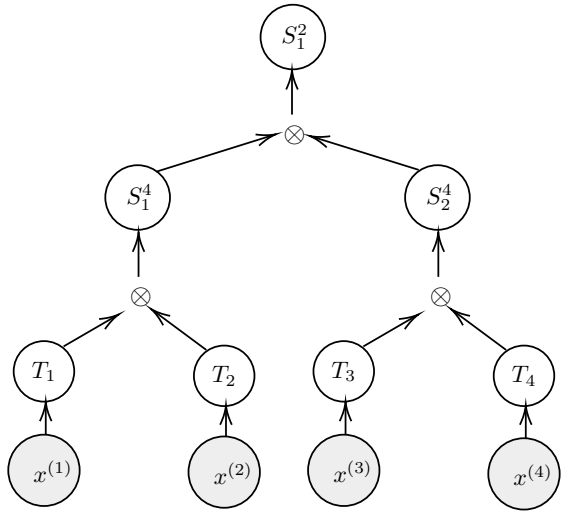
\clearpage
\section{Proofs}\label{proofs in appendix}

\subsection{Proof of Theorem \ref{thm:general-minmax-rf}}\label{apdx:minmax-rf-proof}

    For simplicity, we drop the subscripts $h,\Xi$ from probabilities and expectations. To show that the random features are unbiased, we re-write the expectation:
    \eq{
    &\mathbb{E}\left[ \phi_{\Xi,h}(x)\cdot \phi_{\Xi,h}(x')\right]  & \\
        &= \Pr(h(x)=h(x')) \mathbb{E} \left[ \Xi_{h(x)}\Xi_{h(x')} | h(x)=h(x')\right]  \nonumber & \\
           &\quad+ \Pr(h(x)\neq h(x')) \mathbb{E} \left[ \Xi_{h(x)}\Xi_{h(x')} | h(x)\neq h(x')\right] &\\
        &= \Pr(h(x)=h(x')) \mathbb{E} \left[ \Xi_{h(x)}^2 \right] &\text{(}\Xi_{h(x)}=\Xi_{h(x')}\text{)} \nonumber \\
        &\quad+ \Pr(h(x)\neq h(x')) \mathbb{E} \left[\Xi_{h(x)}\right] \mathbb{E} \left[\Xi_{h(x')}\right] & (\Xi_{h(x)},\Xi_{h(x')}\text{ independent)} \\
        &= 1\cdot \Pr(h(x)=h(x')) + 0\cdot \Pr(h(x)\neq h(x')) &\text{(assumed moments of }\xi) \\
        &= T_{MM}(x,x')& \text{($h(x)$ is an unbiased hash for $T_{MM}$)}
    }

    Because in general $\mathbb{V}[X]=\mathbb{E}[X^2]-\mathbb{E}[X]^2$,
    and we have $\mathbb{E}\left[ \phi_{\Xi,h}(x)\cdot \phi_{\Xi,h}(x')\right]=T_{MM}(x,x')$,
    we only need to compute
    $\mathbb{E}\left[ \left(\phi_{\Xi,h}(x)\cdot \phi_{\Xi,h}(x')\right)^2\right]$.
    This can be done using a similar decomposition:
    \eq{
        \mathbb{E}\left[ \left(\phi_{\Xi,h}(x)\cdot \phi_{\Xi,h}(x')\right)^2\right]&= \Pr(h(x)=h(x')) \mathbb{E}\left[ \left(\phi_{\Xi,h}(x)\cdot \phi_{\Xi,h}(x')\right)^2|h(x)=h(x')\right] \nonumber \\
        &\quad + \Pr(h(x)\neq h(x')) \mathbb{E}\left[ \left(\phi_{\Xi,h}(x)\cdot \phi_{\Xi,h}(x')\right)^2|h(x)\neq h(x')\right] \\
        &= T_{MM}(x,x') \mathbb{E}\left[ \Xi_{h(x)}^4 \right] \nonumber \\
        &\quad + \left(1-T_{MM}(x,x')\right) \mathbb{E}\left[ \Xi_{h(x)}^2 \right] \mathbb{E}\left[ \Xi_{h(x')}^2 \right] \\
        &= T_{MM}(x,x') \mathbb{E}\left[ \Xi_{h(x)}^4 \right] + \left(1-T_{MM}(x,x')\right) 
    }
    Here, the last step follows from the moments of $\Xi$ which were fixed by assumption.
    Subtracting $T_{MM}(x,x')^2$ from the above yields the expression for the variance.

By Jensen's inequality, $\E[\xi^4]\geq \E[\xi^2]^2$, and $\E[\xi^2]=1$ by assumption, so $\E[\xi^4]\geq1$.
    Substituting $\E[\xi^4]=1$ into the equation for $\mathbb{V}(\phi_{\Xi,h}(x)\cdot \phi_{\Xi,h}(x'))$ gives the lower bound and completes the proof.

\subsection{Proof of Theorem \ref{thm:tdp-is-pd} and Corollary \ref{thm:tdp-is-metric}}\label{apdx:tdp-is-pd-proof}

Take any pair of inputs $x,y\neq 0$ in $\R^d$. By the Cauchy--Schwarz inequality, we have
\numeq{ \label{eq:term-bound}
\Bigg| \frac{x\cdot y}{\|x\|^2 + \|y\|^2} \Bigg|\leq \frac{\|x\|\|y\|}{\|x\|^2 + \|y\|^2} \leq \left(\|x\|/\|y\| + \frac{1}{\|x\|/\|y\|} \right)^{-1} \leq \frac{1}{2}.
}
Now, we can write the function $T_{DP}$ as
\begin{align}
    T_{DP}(x,y) = \frac{x\cdot y}{\|x\|^2 + \|y\|^2 - x\cdot y} &= \frac{\frac{x\cdot y}{\|x\|^2 + \|y\|^2}}{1 - \frac{x\cdot y}{\|x\|^2 + \|y\|^2}} \label{eqn:power-series1} \\
    &= \frac{x\cdot y}{\|x\|^2 + \|y\|^2}\left(\frac{1}{1 - \frac{x\cdot y}{\|x\|^2 + \|y\|^2}}\right) \label{eqn:power-series2} \\
    &= \frac{x\cdot y}{\|x\|^2 + \|y\|^2} \sum_{r=0}^\infty \left(\frac{x\cdot y}{\|x\|^2 + \|y\|^2}\right)^r \label{eqn:power-series3} \\
    &= \sum_{r=1}^\infty \left(\frac{x\cdot y}{\|x\|^2 + \|y\|^2}\right)^r \label{eqn:power-series},
\end{align}
where in the identity (\ref{eqn:power-series3}) we use the bound in (\ref{eq:term-bound}) to assert that the series is bounded by $2^{-r}$ and thus absolutely convergent.

Furthermore, because $(x,y)\mapsto x\cdot y$ is a positive definite kernel, and so is
$$(x,y)\mapsto \frac{1}{\|x\|^2+\|y\|^2 } = \int_{0}^\infty  e^{-(\|x\|^2+\|y\|^2)t}  dt,$$
each summand in the power series is positive definite because it is a product of positive definite kernels.
Hence, as sums and limits of positive definite kernels are positive definite, and the power series is convergent, $T_{DP}$ is positive definite in the space $\R^d\setminus \{0\}$. The extension to include the vector $0\in\R^d$ is straightforward, as any Gram matrix including this vector is block diagonal. We conclude that $T_{DP}$ is a positive definite kernel in $\R^d$. 

To prove Corollary \ref{thm:tdp-is-metric}, note that by the Moore--Aronszajn theorem, there exists an RKHS of functions $(\mathcal H, \langle\cdot\rangle_\mathcal{H})$ on $\R^d$ with reproducing kernel $T_{DP}$, such that for any $x,y\in\R^d$, $\langle T_{DP}(x,\cdot), T_{DP}(y,\cdot) \rangle_\mathcal{H} = T_{DP}(x,y)$. Then, observe that 
\eq{
\|T_{DP}(x,\cdot)-T_{DP}(y,\cdot)\|_\mathcal{H}^2
&= \langle T_{DP}(x,\cdot)- T_{DP}(y,\cdot),T_{DP}(x,\cdot)- T_{DP}(y,\cdot) \rangle_\mathcal{H} \\
&= \langle T_{DP}(x,\cdot),T_{DP}(x,\cdot) \rangle_\mathcal{H} 
+ \langle T_{DP}(y,\cdot), T_{DP}(y,\cdot) \rangle_\mathcal{H} \\
& ~~~~~- 2 \langle T_{DP}(x,\cdot), T_{DP}(y,\cdot) \rangle_\mathcal{H} \\
& = 2 - 2T_{DP}(x,y),
}
where in the final identity, we use the fact that $T_{DP}(x,x)=1$ for all $x\in\R^d$. Dividing by 2 and taking square roots on both sides, we obtain
\eq{
\Big\|\frac{1}{2}T_{DP}(x,\cdot)-\frac{1}{2}T_{DP}(y,\cdot)\Big\|_\mathcal{H} = \sqrt{1-T_{DP}(x,y)},
}
where the RKHS norm on the left is clearly a distance metric. 

\subsection{Proofs and derivations for prefactor random features from section~\ref{ssec:prefactor-rf}}
\label{apdx:prefactor-sketch}

\subsubsection{Scalar random features (\Cref{scalar prefactor features lemma})}
First, we present a derivation for the scalar random features for the prefactor $(\|x\|^2+\|x'\|^2)^{-r}$ from \Cref{scalar prefactor features lemma}.
As the kernel is a function of $\|x\|^2$ and $\|x'\|^2$, we can deal, without loss of generality, with the one-dimensional case. We begin by observing that for any $a,b>0$,
\numeq{
\left(\frac{1}{a+b}\right)^r 
&= \int_0^\infty e^{(1/2-a)t}e^{(1/2-b)t} \frac{t^{r-1} e^{-t}}{\Gamma(r)}dt \nonumber \\
&= \int_0^\infty e^{(1/2-a)t}e^{(1/2-b)t}c^{-r}e^{(c-1)t} \frac{c^r t^{r-1} e^{-ct}}{\Gamma(r)}dt \nonumber \\
&= \int_0^\infty e^{(1/2-a)t}e^{(1/2-b)t}\frac{c^{-s}e^{(c-1)t}t^{r-s}\Gamma(s)}{\Gamma(r)} \frac{c^s t^{s-1} e^{-ct}}{\Gamma(s)}dt. \label{1}
}
Defining the function
\eq{
\varphi_Z(a) = e^{(1/2-a)Z} Z^{(r-s)/2}\left(\frac{c^{-s}e^{(c-1)Z}\Gamma(s)}{\Gamma(r)} \right)^{1/2}
}
and letting $Z\sim\text{Gamma}(s,c)$, we recognise the right hand side of (\ref{1}) as the expectation of $\varphi_Z(a)\varphi_Z(b)$. Hence, 
\eq{
\left(\frac{1}{a+b}\right)^r = \E(\varphi_Z(a)\varphi_Z(b)).
}
\emph{This proves \Cref{scalar prefactor features lemma}.}

\subsubsection{QMC random features (\Cref{lemma:def QMC prefactor features})}
Next we motivate and derive our QMC random features for the prefactor,
proving \Cref{lemma:def QMC prefactor features}.
It is possible to sketch $a$ with a vector $\varphi(a) = \frac{1}{\sqrt{M}}(\varphi_{Z_1}(a),\dots,\varphi_{Z_M}(a))$ where $Z_1,\dots,Z_M$ are independent copies of $Z$. This makes the kernel approximation $\varphi(a)\cdot \varphi(b)$ a Monte Carlo estimator of the expectation, with error decreasing with the dimension $M$ of the sketch at the standard rate $O(M^{-1/2})$. 

However, we shall instead consider a Quasi-Monte Carlo (QMC) estimator with an error decreasing at the faster rate $O(M^{-1})$.
Let $\gamma_{s,c}$ be the inverse cumulative distribution function of a Gamma$(s,c)$ random variable. With a change of variables, we can write the integral (\ref{1}) as 
\numeq{ \label{eq:ch-of-variables}
\left(\frac{1}{a+b}\right)^r 
& = \frac{c^{-s}\Gamma(s)}{\Gamma(r)} \int_0^1 e^{-(a+b-c)\gamma_{s,c}(u)} (\gamma_{s,c}(u))^{r-s}  du.
}
Fix $n>0$, $u\in[0,1]$, and let $u_i = (u + i/M)- \lfloor u + i/M \rfloor $ for $i=1,\dots,M$. We can now define features $\phi_{u,r}(a)\in \R^M$, as in Section \ref{sec:tdp-rf}:
\eq{
\phi_{u,r,i}(a) =\frac{1}{\sqrt M}\sqrt{\frac{c^{-s}\Gamma(s)}{\Gamma(r)} } e^{-(a-c/2)\gamma_{s,c}(u_i) }(\gamma_{s,c}(u_i))^{(r-s)/2}.
}
A QMC estimator for $(a+b)^{-r}$ is given by the dot product $\phi_u(a)^T \phi_u(b)$.
This estimator is unbiased:
\begin{lemma*}
If $u$ is random and $\mathcal U(0,1)$, then for all $a,b>0$,
\eq{
\left(\frac{1}{a+b}\right)^r = \E(\phi_{u,r}(a)\cdot \phi_{u,r}(b)).
}
\end{lemma*}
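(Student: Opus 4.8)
The plan is to write the inner product $\phi_{u,r}(a)\cdot\phi_{u,r}(b)$ in closed form, recognize it as a uniform average of the integrand appearing in the change-of-variables identity (\ref{eq:ch-of-variables}), and then integrate this average over $u\sim\mathcal U(0,1)$ using the shift-invariance of the uniform distribution on the circle. First I would multiply the $i$th coordinates of the two feature vectors: the two exponential factors combine via $-(a-c/2)-(b-c/2)=-(a+b-c)$, the two half-powers combine to $(\gamma_{s,c}(u_i))^{r-s}$, and the two $\tfrac1{\sqrt M}\sqrt{c^{-s}\Gamma(s)/\Gamma(r)}$ factors combine to $\tfrac1M\,c^{-s}\Gamma(s)/\Gamma(r)$, giving
\[
\phi_{u,r,i}(a)\,\phi_{u,r,i}(b)=\frac{1}{M}\frac{c^{-s}\Gamma(s)}{\Gamma(r)}\,g(u_i),\qquad g(v):=e^{-(a+b-c)\gamma_{s,c}(v)}\big(\gamma_{s,c}(v)\big)^{r-s}.
\]
Summing over $i$ then yields
\[
\phi_{u,r}(a)\cdot\phi_{u,r}(b)=\frac{c^{-s}\Gamma(s)}{\Gamma(r)}\cdot\frac{1}{M}\sum_{i=1}^M g(u_i).
\]

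Since the identity (\ref{eq:ch-of-variables}) reads $\left(\tfrac{1}{a+b}\right)^r=\frac{c^{-s}\Gamma(s)}{\Gamma(r)}\int_0^1 g(v)\,dv$, it suffices to prove that $\E_u\big[\tfrac1M\sum_{i=1}^M g(u_i)\big]=\int_0^1 g(v)\,dv$. The key step is the observation that each point $u_i=\{u+i/M\}$ (fractional part) is \emph{marginally} uniform on $(0,1)$ whenever $u\sim\mathcal U(0,1)$: this is the shift-invariance of the uniform measure on $\R/\Z$, since adding the constant $i/M$ modulo $1$ maps $\mathcal U(0,1)$ to itself. Hence $\E_u[g(u_i)]=\int_0^1 g(v)\,dv$ for every $i$, and by linearity of expectation over the finite sum,
\[
\E_u\Big[\frac{1}{M}\sum_{i=1}^M g(u_i)\Big]=\frac{1}{M}\sum_{i=1}^M\int_0^1 g(v)\,dv=\int_0^1 g(v)\,dv.
\]
Combining this with the closed form above gives $\E_u[\phi_{u,r}(a)\cdot\phi_{u,r}(b)]=(a+b)^{-r}$, which is the claim after reinstating $a=\|x\|^2$ and $b=\|x'\|^2$.

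The only real obstacle is justifying the shift-invariance claim rigorously, but this is a one-line change of variables on $[0,1)$ splitting the integral $\int_0^1 g(\{u+i/M\})\,du$ at $u=1-i/M$ and recombining the two pieces. Integrability of $g$ is immediate, since $\int_0^1 g(v)\,dv=\frac{\Gamma(r)}{c^{-s}\Gamma(s)}(a+b)^{-r}<\infty$ by (\ref{eq:ch-of-variables}). I would stress that no concentration inequality or QMC-specific machinery is needed for \emph{unbiasedness}; the stronger $O(M^{-1})$ error decay that motivates the QMC construction is a separate matter handled by the subsequent analysis in \Cref{QMC error}, and plays no role here.
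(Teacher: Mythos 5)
Your proof is correct and follows essentially the same route as the paper's: expand the coordinatewise product, use linearity of expectation together with the fact that each $u_i = u + i/M - \lfloor u + i/M\rfloor$ is marginally $\mathcal{U}(0,1)$, and conclude via the change-of-variables identity (\ref{eq:ch-of-variables}). The only difference is that you spell out the shift-invariance justification and the closed form of the inner product, which the paper leaves implicit.
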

\begin{proof}
By linearity of expectation,
\eq{
\E(\phi_{u,r}(a)\cdot \phi_{u,r}(b)) = \frac{1}{M}\sum_{i=1}^M \E\left[\frac{c^{-s}\Gamma(s)}{\Gamma(r)}  e^{-(a+b-c)\gamma_{s,c}(u_i) }(\gamma_{s,c}(u_i))^{r-s} \right].
}
Observing that each $u_i\sim\mathcal U(0,1)$, the result follows by Eq.\ (\ref{eq:ch-of-variables}).
\end{proof}
This lemma is equivalent to \Cref{lemma:def QMC prefactor features}, thereby proving it.

\subsubsection{Relative error bound (\Cref{QMC error})}

In the following lemma, we establish a basic Quasi-Monte Carlo error bound.

\begin{lemma}\label{QMC basic lemma} 
For any $u\in[0,1]$, $c\leq (a+b)$, $0<s \le r $,
\eq{
\Big| \frac{1}{(a+b)^r} - \phi_{u,r}(a) \cdot \phi_{u,r}(b) \Big| \leq \frac{2}{M} \frac{c^{-s}\Gamma(s)}{\Gamma(r)} \left(\frac{r-s}{e(a+b-c)}\right)^{r-s}.
}
\end{lemma}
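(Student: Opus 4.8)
The plan is to recognise the left-hand side as the quadrature error of an equispaced, shifted-lattice quasi-Monte Carlo rule applied to a one-dimensional integral, and to control it by Koksma's inequality, which bounds such an error by the product of the total variation of the integrand and the star discrepancy of the quadrature points.

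First I would use the change of variables (\ref{eq:ch-of-variables}) together with the preceding unbiasedness lemma to write both terms in the bound as a common integral and its lattice sum. Setting
\[
g(u) = \frac{c^{-s}\Gamma(s)}{\Gamma(r)}\,e^{-(a+b-c)\gamma_{s,c}(u)}\bigl(\gamma_{s,c}(u)\bigr)^{r-s},
\]
one checks directly from the definition of $\phi_{u,r,i}$ that $(a+b)^{-r} = \int_0^1 g(u)\,du$ and $\phi_{u,r}(a)\cdot\phi_{u,r}(b) = \tfrac1M\sum_{i=1}^M g(u_i)$, so the quantity to be bounded is exactly $\bigl|\int_0^1 g(u)\,du - \tfrac1M\sum_{i=1}^M g(u_i)\bigr|$.

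Next I would invoke Koksma's inequality, $\bigl|\int_0^1 g - \tfrac1M\sum_i g(u_i)\bigr| \le V(g)\,D^*_M$, and use that the $M$ equispaced points $u_i = (u+i/M)\bmod 1$ have star discrepancy $D^*_M \le 1/M$ for every shift $u$. (Equivalently, and self-containedly, one can partition $[0,1]$ into the $M$ intervals of length $1/M$ delimited by consecutive lattice points, bound the error on each interval by its length times the local total variation of $g$, and sum; this reproduces the factor $1/M$ directly, modulo a little care at the boundary wrap-around.) It then remains to estimate $V(g)$. Because $\gamma_{s,c}$ is monotone increasing, total variation is invariant under this reparametrisation, so $V_{[0,1]}(g) = V_{[0,\infty)}(h)$ with $h(t) = \frac{c^{-s}\Gamma(s)}{\Gamma(r)}\,e^{-(a+b-c)t}\,t^{r-s}$. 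For $c < a+b$ and $r\ge s$ this $h$ is nonnegative, vanishes at both ends (when $r>s$), and is unimodal, whence $V(h)\le 2\max_{t\ge0}h(t)$. Differentiating yields the maximiser $t^\star = (r-s)/(a+b-c)$ and
\[
\max_{t\ge 0}h(t) = \frac{c^{-s}\Gamma(s)}{\Gamma(r)}\left(\frac{r-s}{e(a+b-c)}\right)^{r-s},
\]
and combining this with $D^*_M\le 1/M$ gives the claimed bound.

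The main obstacle I anticipate is the rigorous handling of the quadrature-error machinery rather than any hard estimate: justifying the reparametrisation invariance of the total variation and the discrepancy bound $1/M$ for the shifted equispaced lattice (the self-contained partition argument is the safest route and avoids citing discrepancy theory), together with the clean treatment of the boundary cases $r=s$ (where $h$ is monotone, $V(h)=\max h \le 2\max h$, and $(r-s)^{r-s}=1$) and $c=a+b$ (where the right-hand side is $+\infty$ and the bound is vacuous). The maximisation of $h$ and the identification of the two terms as a quadrature rule are routine once the setup is in place.
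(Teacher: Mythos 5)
Your proposal is correct and follows essentially the same route as the paper's proof: both identify the quantity as the QMC quadrature error of $\int_0^1 g(u)\,du$ over a shifted regular lattice, bound it by $V(g)/M$ (the paper cites this directly; you derive it via Koksma's inequality with $D_M^*\le 1/M$), and compute $V(g)=2\max h$ via unimodality with maximiser $t^\star=(r-s)/(a+b-c)$. Your extra care with the reparametrisation invariance of total variation and the edge cases $r=s$ and $c=a+b$ goes slightly beyond the paper's terser treatment but does not change the argument.
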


\begin{proof}
Consider the QMC approximation of an integral $\int_0^1 f(x)dx$ for a function $f$ of bounded variation, using a regular net of $n$ points. It is well known that the error of this approximation is bounded above by $V(f)/n$ where $V$ is the total variation norm. Hence, 
\eq{
\Big| \frac{1}{(a+b)^r} - \phi_{u,r}(a)\cdot \phi_{u,r}(b) \Big| \leq \frac{1}{n} \frac{c^{-s}\Gamma(s)}{\Gamma(r)} V(f)
}
for $f(u)= e^{-(a+b-c)\gamma_{s,c}(u)} (\gamma_{s,c}(u))^{r-s}$ on $0\leq u\leq 1$. As $\gamma_{s,c}$ is monotone increasing, $f$ is unimodal tending to $0$ as $u\to 0$ and $u\to 1$. Thus $V(f)=2\max_{0\le u\le 1} f(u)$.
The maximum of $f$ is attained where $\gamma_{s,c}(u)= (r-s)/(a+b-c)$, hence 
\eq{
V(f) = 2\left(\frac{r-s}{e(a+b-c)}\right)^{r-s}. & \qedhere
}
\end{proof}

We are interested in sketching $( \| x^{(i)} \| ^2+ \| x^{(j)} \| ^2)^{-r}$ for every pair of inputs $x^{(i)}, x^{(j)}$ in a dataset $\{x^{(1)},\dots,x^{(n)}\}$. The previous lemma can be used to choose values $c$ and $s\le r$ which minimise the relative error
\eq{
E_{i,j} = \frac{   \phi_{u,r}(  x^{(i)} )\cdot \phi_{u,r}( x^{(j)} ) - (\| x^{(i)} \| ^2+ \| x^{(j)} \| ^2)^{-r} }{( \| x^{(i)} \| ^2+ \| x^{(j)} \| ^2)^{-r}} .
}

The kernel $k(x,y)$ is invariant to scaling $x$ and $y$, \emph{i.e.}, $k(x,y)=k(\ell x,\ell y)$ for all $\ell>0$. Thus, if we are interested in approximating the Gram matrix for $\{x^{(1)},\dots,x^{(n)}\}$, we may assume without loss of generality that $\max_i  \| x^{(i)} \| ^2 = 1$ and let $\zeta = \min_i  \| x^{(i)} \| ^2$. Here, $\zeta$ will act as a condition number for the dataset, which will determine the error of the random feature approximation.

By Lemma \ref{QMC basic lemma}, we have
\eq{
|E_{i,j} |
&\leq ( \| x^{(i)} \| ^2+ \| x^{(j)} \| ^2)^r \frac{2}{M} \frac{c^{-s}\Gamma(s)}{\Gamma(r)} \left(\frac{r-s}{e( \| x^{(i)} \| ^2+ \| x^{(j)} \| ^2-c)}\right)^{r-s} \\
&= ( \| x^{(i)} \| ^2+ \| x^{(j)} \| ^2)^r \frac{2}{M} \frac{c^{-s}\Gamma(s)(r-s)^{r-s}}{\Gamma(r)e^{(r-s)}} \left(\frac{1}{ \| x^{(i)} \| ^2+ \| x^{(j)} \| ^2-c}\right)^{r-s} \\
&= \frac{2}{M} \frac{\Gamma(s)(r-s)^{r-s}}{\Gamma(r)e^{(r-s)}} \left(1-\frac{c}{ \| x^{(i)} \| ^2+ \| x^{(j)} \| ^2}\right)^{s-r} \left(\frac{c}{ \| x^{(i)} \| ^2+ \| x^{(j)} \| ^2}\right)^{-s} .
} 
The last two factors are log-convex in $c/( \| x^{(i)} \| ^2+ \| x^{(j)} \| ^2)>0$. Hence, for a fixed $c$, the product
$$\left(1-\frac{c}{ \| x^{(i)} \| ^2+ \| x^{(j)} \| ^2}\right)^{s-r} \left(\frac{c}{ \| x^{(i)} \| ^2+ \| x^{(j)} \| ^2}\right)^{-s} $$
 is maximised at either $ \| x^{(i)} \| ^2+ \| x^{(j)} \| ^2=2$ or $ \| x^{(i)} \| ^2+ \| x^{(j)} \| ^2=2\zeta$. Therefore,
\eq{
|E_{i,j}|
&\leq \frac{2}{M} \frac{\Gamma(s)(r-s)^{r-s}}{\Gamma(r)e^{(r-s)}} \left[\left(1-\frac{c}{2}\right)^{s-r} \left(\frac{c}{2}\right)^{-s} \vee \left(1-\frac{c}{2\zeta}\right)^{s-r} \left(\frac{c}{2\zeta}\right)^{-s}\right].
} 
Using again the log-convexity of $y\mapsto (1-y)^{s-r} y^{-s}$, we find that 
$$\left(1-\frac{c}{2}\right)^{s-r} \left(\frac{c}{2}\right)^{-s} $$
is minimised as a function of $c$ at $c_1 = 2s/r$. And
$$\left(1-\frac{c}{2\zeta}\right)^{s-r} \left(\frac{c}{2\zeta}\right)^{-s}$$
is minimised at $c_2 = 2\zeta s/r$. Recall that Lemma \ref{QMC basic lemma} requires $0\le c\leq  \| x^{(i)} \| ^2+ \| x^{(j)} \| ^2$ for all $i,j$ or $0\le c\leq 2\zeta$, which is satisfied by $c_2$ when $s\le r$, but not necessarily by $c_1$. Thus, it is reasonable to set $c=c_2$, which leads to
\eq{
|E_{i,j}| 
&\leq \frac{2}{M} \frac{\Gamma(s)(r-s)^{r-s}}{\Gamma(r)e^{(r-s)}} \left[\left(1-\frac{\zeta s}{r}\right)^{s-r} \left(\frac{\zeta s}{r}\right)^{-s} \vee \left(1- \frac{s}{r}\right)^{s-r} \left( \frac{s}{r}\right)^{-s}\right] \\
&\leq \frac{2}{M} \frac{\Gamma(s)(r-s)^{r-s}r^{r}s^{-s}}{\Gamma(r)e^{(r-s)}} \left[\left(r-\zeta s\right)^{s-r} \zeta^{-s} \vee \left(r- s\right)^{s-r} \right] \\
&\leq \frac{2}{M} \frac{\Gamma(s)(s/e)^{-s}}{\Gamma(r)(r/e)^{-r}} \left[\left(\frac{r-\zeta s}{r-s}\right)^{s-r} \zeta^{-s} \vee 1 \right] .
} 
When $\zeta=1$ (all inputs $x^{(i)}$ have the same norm), setting $s=r$ leads to an  error bound of $4/n$. However, small values of $\zeta$ can make the upper bound blow up unless we tune $s$ suitably. For a given value of $\zeta$, this upper bound could be maximised numerically over $0<s\le r$. However, setting $s = r\zeta$, we obtain
\eq{
|E_{i,j}|
&\leq \frac{2}{M} \frac{\Gamma(r\zeta)(r\zeta /e)^{-r\zeta}}{\Gamma(r)(r/e)^{-r}} \left[\left(\frac{1-\zeta^2}{1-\zeta}\right)^{r(\zeta-1)} \zeta^{-r\zeta} \vee 1 \right]  \\
&\leq \frac{2}{M} \frac{\Gamma(r\zeta)\zeta^{-r\zeta}}{\Gamma(r)} (r/e)^{r(\zeta-1)}\left[\left(\left(\frac{1-\zeta^2}{1-\zeta}\right)^{\zeta-1} \zeta^{-\zeta}\right)^r \vee 1 \right] \\
&\leq \frac{2}{M} \frac{\Gamma(r\zeta)\zeta^{-r\zeta}}{\Gamma(r)} (r/e)^{r(\zeta-1)} \rho^r
} 
where $\rho\approx 1.2$ is defined by 
$$\rho := \max_{0\leq \zeta\leq 1}~\left(\frac{1-\zeta^2}{1-\zeta}\right)^{\zeta-1} \zeta^{-\zeta}.$$
This proves the first inequality in Lemma \ref{QMC error}. 

To conclude the proof of Lemma \ref{QMC error}, we study the behaviour of the upper bound as it diverges when $\zeta\to 0$. Using the asymptotic $\Gamma(x)\sim 1/x$ as $x\to0$, we have
\eq{
\frac{2}{M} \frac{\Gamma(r\zeta)\zeta^{-r\zeta}}{\Gamma(r)} (r/e)^{r(\zeta-1)} \rho^r \sim (M\zeta)^{-1} \frac{2(r/e)^{-r} \rho^r}{r\Gamma(r)} 
}
where $\frac{2(r/e)^{-r} \rho^r}{r\Gamma(r)}$ is bounded for $r\ge 1$. Therefore, the upper bound is $O(M^{-1}\zeta^{-1})$.
This completes the proof.

\subsection{Proof of Theorem \ref{error bound theorem}}
\label{apdx:proof-of-error-bound}

We begin by defining the stable rank, a common notion of effective dimension for a data matrix $A\in\R^{d\times n}$ or its associated Gram matrix $A^TA$.

\begin{definition}
The \emph{stable rank} $\text{sr}(A)$ of a data matrix $A\in\R^{d\times n}$ is given by
\eq{
\text{sr}(A) = \frac{\|A\|_F^2}{\|A\|^2_\text{op}}.
}
For a positive semi-definite matrix $K\in\R^{n\times n}$, define $\tilde{\text{sr}}(K) = \Tr(K)/\|K\|_\text{op}$.
\end{definition}

The stable rank of $A$ is upper bounded by the rank, but it is insensitive to small singular values, so it can be much smaller than $\min(n,d)$. Note that $\text{sr}(A) = \tilde{\text{sr}}(A^TA)$. The function $\tilde{\text{sr}}(K)$ extends the concept of stable rank to a general kernel matrix $K$. 

We now re-state \cref{error bound theorem} for convenience, and provide a proof:

\begin{theorem*}

\end{theorem*}

We begin by specifying the sketch $\Phi$. Define
\eq{
\Phi(x) = \oplus_{r=1}^R \Phi^r(x)
}
where $\Phi^r(x) = \Pi^{r+1}(\phi_{u,r}(x)\otimes x^{\otimes r})$, $\Pi^{r}$ is a \textsc{TreeSketch} (see appendix \ref{apdx:ahle-sketch}) and $\phi_{u,r}(x)$ is the random feature expansion for the prefactor defined in Lemma \ref{lemma:def QMC prefactor features}. \textsc{TreeSketch} \citep{ahle2020oblivious}, must be instantiated with the base sketches \textsc{OSNAP}  \citep{nelson2013osnap} at the leaves ($T_\text{base}$), and \textsc{TensorSRHT} \citep{ahle2020oblivious} at internal nodes ($S_\text{base})$. The choices for $R$, and the dimensions of $\Pi^r$ and $\phi_{u,r}$ will be made explicit in the proof.

For each $r\in[R]$, define matrices $K^r$, $\tilde K^r$, and $\widehat K^r$ in $\R^{n\times n}$ by
\eq{
K^r_{i,j} &= \left(\frac{x^{(i)}\cdot x^{(j)}}{\|x^{(i)}\|^2+\|x^{(j)}\|^2}\right)^r, \\
\tilde K^r_{i,j} & = (x^{(i) \otimes r} \otimes \phi_{u,r}(x^{(i)}))\cdot (x^{(i) \otimes r} \otimes \phi_{u,r}(x^{(i)})), \\
\widehat K^r_{i,j} &= \Phi_r(x^{(i)})\cdot \Phi_r(x^{(j)}),
} 
Then, by the triangle inequality, we have
\numeq{
\|\widehat K - K\|_\text{op} 
&\leq \sum_{r=1}^R \|\widehat K^r - \tilde K^r \|_\text{op} + \sum_{r=1}^R \| \tilde K^r-K^r \|_\text{op}+ \left\|\sum_{r=1}^R K^r - K\right\|_\text{op} \nonumber \\
&\leq \sum_{r=1}^R \|\widehat K^r - \tilde K^r \|_\text{op} + \sum_{r=1}^R \| \tilde K^r-K^r \|_\text{op}+ \sum_{r=R+1}^\infty \|K^r \|_\text{op}. \label{terms}
}
We shall find high-probability bounds for each of the terms on the right in turn. For the last term, we can choose $R=C \log(\tilde{\text{sr}}(K) \varepsilon^{-1})$ with $C$ large enough, and apply the following upper bound
\numeq{ 
\sum_{r=R+1}^\infty \|K^r \|_\text{op} 
&\leq \sum_{r=R+1}^\infty \|K^r \|_F \nonumber \\
&\leq \sum_{r=R+1} n \left(\max_{i,j\in[n]} \frac{\|x^{(i)}\cdot x^{(j)}\|^r}{(\|x^{(i)}\|^2+\|x^{(j)}\|^2)^r}\right)^{1/2} \nonumber \\
&\leq \sum_{r=R+1} n2^{-r/2} \nonumber \\
&\leq n2^{-(R-1)/2}  \leq \frac{n\varepsilon }{3\tilde{\text{sr}}(K)} \leq  \frac{\varepsilon \|K\|_\text{op}}{3}, \label{third term}
}
where the final inequality follows from the fact that $\tilde{\text{sr}}(K) = n/\|K\|_\text{op}$, as the kernel matrix $K$ has ones on the diagonal. 

The terms in the second sum of (\ref{terms}) may be written as
\eq{
\| \tilde K^r-K^r \|_\text{op} = \| \tilde E\odot K^r \|_\text{op}
}
where $\odot$ denotes the Hadamard product and $E$ is the error matrix defined in (\ref{error matrix}). Noting that $E$ is symmetric, we can apply Corollary 11 in \cite{ando1987singular} %
to assert that
\eq{
\| \tilde K^r-K^r \|_\text{op} \leq \max_i |E_{i,i}| \|K^r\|_\text{op}.
}
By the triangle inequality, for each $r\ge 1$, we have $\|K^r\|_\text{op}\leq \|K\|_\text{op}$. Hence, with the choice $M=\Omega(R \zeta^{-1}\varepsilon^{-1})$ for the dimension of $\phi_{u,r}$, and applying Lemma \ref{QMC error} we obtain
\numeq{ \label{second term}
\sum_{r=1}^R \| \tilde K^r-K^r \|_\text{op} \leq \frac{\varepsilon\|K\|_\text{op}}{3}.
}

Finally, we turn to the first term in (\ref{terms}). Let $A_r$ be a matrix with columns $x^{(i) \otimes r} \otimes \phi_{u,r}(x^{(i)})$ for $i=1,\dots,n$. We can rewrite the error in question
\eq{
\|\widehat K^r - \tilde K^r \|_\text{op} = \| (\Pi^{r+1} A_r )^T\Pi^{r+1} A_r -A_r^TA_r\|_\text{op}
}
In Lemma \ref{A norm bounds}, we establish that $\|A_r\|_F^2\leq (1+\varepsilon)\Tr(K)$ and $\|A_r\|^2_\text{op}\leq (1+\varepsilon)\|K\|_\text{op}$. Let $\delta = 1/\text{poly}(n)$ denote the error tolerance.
Choose the dimension of the sketch $\Pi^{r+1}$ to be $m_r = \Omega\left(\frac{R^2 r^4 \tilde{\text{sr}}(K)}{\varepsilon^2}\log^3\left(\frac{n(d\vee M)}{\varepsilon \delta}\right)\right)$ and the sparsity parameter of \textsc{OSNAP} as $s=\frac{R^2 r^4}{\varepsilon^2}\log^3\left(\frac{n(d\vee M)}{\varepsilon \delta}\right)$. Then, Lemmata 32--34 in \cite{ahle2020oblivious} ensure that, 
\eq{
\Pr\left(\|\widehat K^r - \tilde K^r \|_\text{op} \geq \frac{\varepsilon\|K\|_\text{op}}{3R} \right) = \Pr\left(\| (\Pi^{r+1} A_r )^T\Pi^{r+1} A_r -A_r^TA_r\|_\text{op} \geq \frac{\varepsilon\|K\|_\text{op}}{3R} \right) \le \frac{1}{\text{poly}(n)}.
}
Taking a union bound, we obtain 
\numeq{ \label{first term}
\Pr\left(\sum_{r=1}^R\|\widehat K^r - \tilde K^r \|_\text{op} \ge \frac{\varepsilon\|K\|_\text{op}}{3} \right) \le \frac{1}{\text{poly}(n)}.
}
Finally, combining the bounds (\ref{first term}), (\ref{second term}), and (\ref{third term}), we obtain 
\eq{
\Pr\left (\|\widehat K - K\|_\text{op}  \ge \varepsilon\|K\|_\text{op}\right) \le \frac{1}{\text{poly}(n)}.
}

\paragraph{Dimension of the sketch $\Phi$:} The total dimension of the sketch is $m=m_1+\dots+m_R$, where $m_r =  \Omega\left(\frac{R^2 r^4 \tilde{\text{sr}}(K)}{\varepsilon^2}\log^3\left(\frac{n(d\vee M)}{\varepsilon \delta}\right)\right)$ with $R = C \log(\tilde{\text{sr}}(K) \varepsilon^{-1})$. Hence, ignoring poly-logarithmic factors, the sketch has dimension $m = \tilde \Omega(\tilde{\text{sr}}(K)/\varepsilon^2)$.

\paragraph{Runtime:} We first estimate the runtime of applying $\Phi^r$. Applying \textsc{OSNAP} with sparsity parameter $s$ to a vector $w\in\R^d$ takes $O(s\text{nnz}(w))$ time. Thus, applying $r$ independent  \textsc{OSNAP}  sketches to each $x^{(i)}$ for $i\in[n]$ takes $O(sr\text{nnz}(X))$ time. Applying \textsc{OSNAP} to $\phi_{u,r}(x^{(i)})$ for $i\in[n]$ takes $O(nsM)$ time.  
Then, applying one \textsc{TensorSRHT} sketch takes $O(m_r\log m_r)$ time, and therefore applying all the necessary copies of $S_\text{base}$ requires $O(r m_r\log m_r)$. In total, the sketch $\Phi^atior$ may be computed in $O(n r m_r\log m_r + s\text{nnz}(X) + nsM)$. Adding these runtimes over $r\in [R]$ gives us a total runtime which is $\tilde O(\tilde{\text{sr}}(K)n\varepsilon^{-2}+\text{nnz}(X)\varepsilon^{-2}+n\zeta^{-1}\varepsilon^{-3})$.\\

\begin{lemma}\label{A norm bounds}
For each $r=1,\dots, R$, we have $\|A_r\|_\text{op}^2\leq (1+\varepsilon)\|K\|_\text{op}$ and $\|A_r\|_F^2 \leq (1+\varepsilon)\Tr(K)$.
\end{lemma}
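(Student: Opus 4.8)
The plan is to observe that $A_r^{\top}A_r$ is \emph{exactly} the matrix $\tilde K^r$ already introduced in the proof, so both quantities $\|A_r\|_F^2 = \Tr(A_r^{\top}A_r)$ and $\|A_r\|_\text{op}^2 = \|A_r^{\top}A_r\|_\text{op}$ reduce to the trace and operator norm of $\tilde K^r$. These are then controlled through an entrywise comparison of $\tilde K^r$ with the clean power-series matrix $K^r$.

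First I would make the factorisation explicit. Using the tensor-product rule $(a\otimes b)\cdot(c\otimes d) = (a\cdot c)(b\cdot d)$ together with $x^{(i)\otimes r}\cdot x^{(j)\otimes r} = (x^{(i)}\cdot x^{(j)})^r$, the $(i,j)$ entry of $\tilde K^r$ equals $(x^{(i)}\cdot x^{(j)})^r\,[\phi_{u,r}(x^{(i)})\cdot\phi_{u,r}(x^{(j)})]$. By the definition of the relative error matrix $E$ in \cref{error matrix}, the bracketed prefactor estimate equals $(\|x^{(i)}\|^2+\|x^{(j)}\|^2)^{-r}(1+E_{i,j})$, so that $\tilde K^r_{i,j} = K^r_{i,j}(1+E_{i,j})$, i.e. $\tilde K^r = K^r + E\odot K^r$ with $\odot$ the Hadamard product.

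Next I would invoke two facts already available in the surrounding argument. From \cref{QMC error}, the choice $M=\Omega(R\zeta^{-1}\varepsilon^{-1})$ used in the main proof gives $\max_{i,j}|E_{i,j}|\le\varepsilon$, hence also $\max_i|E_{i,i}|\le\varepsilon$. Moreover, since $K=\sum_{s\ge 1}K^s$ with every $K^s$ positive semidefinite, we have $K-K^r=\sum_{s\neq r}K^s\succeq 0$, giving both $\|K^r\|_\text{op}\le\|K\|_\text{op}$ and $\Tr(K^r)\le\Tr(K)$. For the Frobenius bound I would then write $\|A_r\|_F^2=\Tr(\tilde K^r)=\sum_i K^r_{i,i}(1+E_{i,i})\le(1+\varepsilon)\Tr(K^r)\le(1+\varepsilon)\Tr(K)$ (one may even note $K^r_{i,i}=2^{-r}$ directly, so $\Tr(K^r)=2^{-r}n\le n=\Tr(K)$). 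For the operator-norm bound I would split $\|\tilde K^r\|_\text{op}\le\|K^r\|_\text{op}+\|E\odot K^r\|_\text{op}$ and apply the Hadamard inequality of Corollary 11 in \cite{ando1987singular} — legitimate because $E$ is symmetric and $K^r$ is positive semidefinite — to get $\|E\odot K^r\|_\text{op}\le\max_i|E_{i,i}|\,\|K^r\|_\text{op}\le\varepsilon\|K^r\|_\text{op}$, whence $\|A_r\|_\text{op}^2=\|\tilde K^r\|_\text{op}\le(1+\varepsilon)\|K^r\|_\text{op}\le(1+\varepsilon)\|K\|_\text{op}$.

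I do not anticipate a real obstacle, since each ingredient is either elementary or already used earlier: the factorisation of $\tilde K^r$, the diagonal error control from \cref{QMC error}, the semidefinite ordering $K\succeq K^r$, and the Hadamard bound from \cite{ando1987singular}. The only point requiring care is that the Hadamard inequality should be applied with the \emph{diagonal} maximum $\max_i|E_{i,i}|$ rather than the global maximum $\max_{i,j}|E_{i,j}|$; this is exactly the form in which it was invoked for the second term of (\ref{terms}), so reusing it here is immediate.
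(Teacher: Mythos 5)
Your proof is correct and follows essentially the same route as the paper: identify $A_r^{\top}A_r=\tilde K^r$, write $\tilde K^r-K^r=E\odot K^r$, bound the operator norm by the triangle inequality together with the Hadamard-product bound $\|E\odot K^r\|_\text{op}\leq\max_i|E_{i,i}|\,\|K^r\|_\text{op}$ from Corollary 11 of Ando (1987), and finish with $K^r\preceq K$ — which you, unlike the paper, explicitly justify via the positive-definite power series $K=\sum_{s\geq 1}K^s$. Your only divergence is the trace step, where the paper invokes Ando's corollary a second time through a singular-value sum ($\Tr(E\odot K^r)=\sum_i\sigma_i(E\odot K^r)\leq\max_i|E_{i,i}|\sum_i\sigma_i(K^r)$), whereas you evaluate $\Tr(E\odot K^r)=\sum_i E_{i,i}K^r_{i,i}$ directly from the diagonal (noting $K^r_{i,i}=2^{-r}$) — a slightly more elementary path to the same bound.
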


\begin{proof}
Note that $A_r^T A_r = \tilde K^r$. Hence, we aim to show that $\|\tilde K^r\|_\text{op}\leq (1+\varepsilon)\|K\|_\text{op}$, and that $\Tr(\tilde K^r) \leq(1+\varepsilon)\Tr(K)$. By the triangle inequality, 
\eq{
\|\tilde K^r\|_\text{op} \leq \|\tilde K^r\|_\text{op} + \|\tilde K^r - K^r\|_\text{op} \leq (1+\varepsilon)\|K^r\|_\text{op} \leq (1+\varepsilon)\|K\|_\text{op},
}
where the penultimate inequality was shown in the proof of Theorem \ref{error bound theorem}, and the final inequality is due to the fact that $K^r\preceq K$. 

Then, by linearity of the trace,
\numeq{
\Tr(\tilde K^r) = \Tr(K^r) + \Tr(\tilde K^r - K^r)  =\Tr(K^r) + \Tr(E\odot K^r). \label{trace bound}
}
And applying once more Corollary 11 of \cite{ando1987singular}, we have
\eq{
\Tr(E\odot K^r)  = \sum_{i=1}^n \sigma_i(E\odot K^r) \leq \max_{i\in[n]} |E_{i,i}| \sum_{i=1}^n \sigma_i(K^r) \leq \varepsilon \sum_{i=1}^n \sigma_i(K^r),
}
where $\sigma_i(\cdot)$ denotes the $i$th singular value. Using once more that $K^r\preceq K$ and hence $\sigma_i(K^r)\leq \sigma_i(K)$ for all $i\in[n]$, we obtain,
\eq{
\Tr(E\odot K^r) \leq \varepsilon \Tr(K).
}
Plugging this into (\ref{trace bound}) yields $\Tr(\tilde K^r) \leq(1+\varepsilon)\Tr(K)$.
\end{proof}

\subsection{Rank of Tanimoto Kernels}\label{apdx:proof-of-kernel-rank}

\begin{lemma}
    There does not exist an exact finite-dimensional feature map for either $T_{MM}$ or $T_{DP}$.
\end{lemma}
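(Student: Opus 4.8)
The plan is to argue by contradiction through the rank of Gram matrices. If $\phi:\R^d\to\R^M$ were an exact feature map for a kernel $k$ (so that $k(x,x')=\phi(x)\cdot\phi(x')$), then for any points $x^{(1)},\dots,x^{(n)}$ the Gram matrix factors as $K=\Phi^\top\Phi$ with $\Phi=[\phi(x^{(1)}),\dots,\phi(x^{(n)})]\in\R^{M\times n}$, and hence $\mathrm{rank}(K)\le\mathrm{rank}(\Phi)\le M$. It therefore suffices to exhibit, for \emph{every} $n$, a configuration of $n$ points whose Gram matrix has full rank $n$: choosing $n>M$ then contradicts the bound. I would prove this simultaneously for $T_{MM}$ and $T_{DP}$.

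First I would reduce to the scalar case $d=1$. Embedding $\R\hookrightarrow\R^d$ via $x\mapsto(x,0,\dots,0)$ leaves both kernels unchanged, since on such vectors $T_{MM}((x,0,\dots),(x',0,\dots))=\min(x,x')/\max(x,x')$ and $T_{DP}((x,0,\dots),(x',0,\dots))=xx'/(x^2+x'^2-xx')$, which are exactly the one-dimensional kernels. Thus a high-rank configuration in $\R$ is automatically one in $\R^d$, and it is enough to treat scalar inputs.

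The key construction is a \emph{geometric} point set $x^{(i)}=t^i$ for $i=1,\dots,n$ with a parameter $t>1$ (so the points are distinct and positive). I claim that as $t\to\infty$ the $n\times n$ Gram matrix converges entrywise to the identity, for \emph{both} kernels. The diagonal is $k(x^{(i)},x^{(i)})=1$ in either case. For $i\ne j$, in the $T_{MM}$ case $T_{MM}(t^i,t^j)=t^{-|i-j|}\to0$, while in the $T_{DP}$ case, taking $i<j$ and dividing through by $t^{2j}$,
\[
T_{DP}(t^i,t^j)=\frac{t^{i+j}}{t^{2i}+t^{2j}-t^{i+j}}=\frac{t^{i-j}}{1+t^{2(i-j)}-t^{i-j}}\longrightarrow0 .
\]
Since the determinant is a polynomial, hence continuous, function of the matrix entries, $\det K\to\det I_n=1$. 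Therefore for $t$ large enough $\det K\neq0$, i.e.\ $K$ has full rank $n$; as $n$ was arbitrary this contradicts any finite rank bound, proving the lemma.

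I expect the only genuine content to be the off-diagonal decay estimate for $T_{DP}$ (the short computation above), together with the elementary but essential observation that a rank-$M$ feature map caps the rank of every Gram matrix. As a sanity check one can bypass the limiting argument for $T_{MM}$ entirely: the geometric points give the Kac--Murdock--Szeg\H{o} matrix $K_{ij}=\rho^{|i-j|}$ with $\rho=t^{-1}\in(0,1)$, whose determinant equals $(1-\rho^2)^{n-1}>0$ exactly, so those matrices have full rank for \emph{every} $t>1$. No comparably clean closed form seems available for $T_{DP}$, which is precisely why the continuity-at-infinity argument is the cleanest uniform route covering both kernels at once.
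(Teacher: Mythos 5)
Your proof is correct, and its skeleton matches the paper's almost exactly: both argue that an $M$-dimensional feature map caps the rank of every Gram matrix, then exhibit arbitrarily large full-rank Gram matrices built from a geometric sequence of positive scalars (the paper uses $a^{(i)}=(2n)^i$, i.e.\ your $t^i$ with the specific choice $t=2n$). The one genuine difference is the nonsingularity certificate. The paper's choice $t=2n$ lets it verify \emph{strict diagonal dominance} directly: for $T_{MM}$ the off-diagonal row sums are $\sum_{j\ne i}(2n)^{-|i-j|}\le 1/2$, and for $T_{DP}$ they are $\sum_{j\ne i}\bigl((2n)^{|i-j|}+(2n)^{-|i-j|}-1\bigr)^{-1}<1$, so each explicit matrix is invertible with no limiting argument. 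You instead fix $n$, send $t\to\infty$, observe entrywise convergence of the Gram matrix to $I_n$, and invoke continuity of the determinant --- a softer argument that treats both kernels uniformly at the cost of a non-explicit witness; your exact Kac--Murdock--Szeg\H{o} determinant $(1-t^{-2})^{n-1}$ for $T_{MM}$ even sharpens the picture there, showing \emph{every} $t>1$ works. You are also slightly more careful than the paper on one point: the paper's points live in $\R$ while the hypothesized feature map is on $\R^d$, and your explicit embedding $x\mapsto(x,0,\dots,0)$, which you check preserves both kernels, fills that small gap. Both routes are complete and correct.
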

\begin{proof}
    We will use a proof by contradiction, first focusing on $T_{MM}$.
    The setup for the contradiction is as follows: suppose there existed an exact feature map
    $f:\R^d\mapsto\R^M$
    such that $T_{MM}(x,y)=f(x)\cdot f(y)\ \forall x,y\in\R^d$.
    This would imply that any kernel matrix between $n$ points $X$
    could be written as an inner product
    \begin{equation*}
        T_{MM}(X,X)=f(X)^T f(X) \ .
    \end{equation*}
    Because $f$ outputs $M$ dimensional vectors this would imply the resulting matrix has rank at most $M$.
    Therefore, under this hypothesis it should not be possible to form a kernel matrix of more than $M$ inputs which is full-rank (invertible). Our contradiction will be to construct such a matrix.

    We now present a way to construct a full-rank $T_{MM}$ kernel matrix with any number of points $n$.
    Consider the set of points $\{a^{(i)}\}_{i=1}^n$ where $a^{(i)}= (2n)^i \in \R$.
    For any $i$, we have that $T_{MM}(a^{(i)}, a^{(i)})=1$
    and if $n\geq 2$ then
    \begin{align*}
        \sum_{j\neq i} | T_{MM}(a^{(i)}, a^{(j)}) | = \sum_{j\neq i} (2n)^{-|i-j|} \leq \sum_{j\neq i} \frac{1}{2n} \leq \frac{1}{2}\ ,
    \end{align*}
    meaning the kernel matrix is \emph{strictly diagonally dominant} and therefore non-singular.
    Since such a construction exists for any $n$,
    setting $n=M+1$ contradicts the assumption of an $M$-dimensional feature map, and repeating this argument for all finite $M$ proves that no finite-dimensional feature map can exists for $T_{MM}$.

    The proof for $T_{DP}$ is almost identical.
    Consider the same sequence of points as in the preceding paragraph.
    For any $i$, we have that $T_{DP}(a^{(i)}, a^{(i)})=1$ and
    \begin{align*}
        \sum_{j\neq i} | T_{DP}(a^{(i)}, a^{(j)}) | = \sum_{j\neq i} \frac{1}{(2n)^{|i-j|} + (2n)^{-|i-j|} - 1} \leq \sum_{j\neq i} \frac{1}{2n-1} < 1\ ,
    \end{align*}
    meaning the kernel matrix for $T_{DP}$ is also strictly diagonally dominant,
    thereby also precluding the existence of an $M$-dimensional feature map for finite $M$.

\end{proof}

This result suggests that the best we can hope for is an \emph{approximate} finite-dimensional feature map for both kernels,
which is exactly what is provided in this paper.
\clearpage
\section{Methods to correct the bias of Tanimoto dot product random features}
\label{apdx:bias-correction}

Here, we describe the bias correction strategies mentioned in Section \ref{sec:tdp-rf} and tested experimentally in section~\ref{sec:experiments}.
We begin by noting that when $x,x'\in\R_{\ge 0}^d$, $x\cdot x'\ge 0$, and therefore, the power series
\eq{
T_{DP}(x,x')=\sum_{r=1}^\infty (x\cdot x')^r(\|x\|^2+\|x'\|^2)^{-r}
}
is monotone. Therefore, if we use an unbiased sketch for the truncated series 
\eq{
\sum_{r=1}^R (x\cdot x')^r(\|x\|^2+\|x'\|^2)^{-r}
}
as the one constructed in Section \ref{sec:tdp-rf}, the final sketch will be biased \emph{downward}:
\eq{
\E(\Phi(x)\cdot\Phi(x'))< T_{DP}(x,x')
}
for all $x,x'\in\R_{\ge 0}^d$. Below we introduce two strategies to remedy this issue.

\subsection{Bias correction strategy 1: normalize the features}

Empirically, we observe that the highest bias occurs in the diagonal elements of the kernel matrix. As the kernel satisfies $T_{DP}(x,x)=1$ for all $x\in\R^d$, one possible correction is to \emph{normalize} the sketch to obtain
\eq{
\tilde \Phi(x) = \frac{\Phi(x)}{\|\Phi(x)\|}.
}
This remains an oblivious sketch, and ensures that the diagonal of the kernel matrix is estimated exactly, at the expense of possibly introducing some bias in off-diagonal elements of $K$.

\subsection{Bias correction strategy 2: sketch the residual}

To simplify the algebra,
let $t_{x,y}=\frac{x\cdot y}{\|x\|^2 + \|y\|^2}$.
The power series for $T_{DP}$ can then be re-written as:
\begin{align}
    T_{DP}(x,y) &= \sum_{r=1}^\infty \left(t_{x,y}\right)^r \nonumber\\
    &= \sum_{r=1}^{R} \left(t_{x,y}\right)^r + \left(t_{x,y}\right)^{R+1} + \sum_{r=R+2}^{\infty} \left(t_{x,y}\right)^r \nonumber\\
    &= \sum_{r=1}^{R} \left(t_{x,y}\right)^r + \left(t_{x,y}\right)^{R+1} + \left(t_{x,y}\right)^{R+1}\sum_{r=1}^{\infty} \left(t_{x,y}\right)^r \nonumber\\
    &= \underbrace{\sum_{r=1}^{R} \left(t_{x,y}\right)^r}_{k^{R}} + \underbrace{\left(t_{x,y}\right)^{R+1} \left(1 + T_{DP}(x,y)\right)}_{\text{truncation error}}
\end{align}
Critically, the truncation error can be written in terms of the kernel value itself, without any infinite sums.
This enables a simple procedure to generate random features for \emph{both} the truncated
power series and the remainder:
\begin{enumerate}
    \item Compute and store $\Phi(x)$ 
        as random features for the truncated kernel $k^{R}$.
    \item  Concatenate a single 1 onto $\Phi(x)$
        to produce features $\Phi_{+1}(x) = (1,\Phi(x))$ which approximate the kernel $1+k^{R}$
        using only a single extra dimension.
        Treat this as approximate random features for the kernel $1+k$.
    \item Compute random features $\Phi_{r+1}(x)$ for $t_{x,y}^{R+1}$.
    \item Apply \textsc{Sketch} to the tensor product $\Phi_{r+1}(x)\otimes \Phi_{+1}(x)$ to obtain $\Delta(x)$,
        which approximates random features of the truncation error 
        $t_{x,y}^{R+1}(1+T_{DP}(x,y))$.
    \item Concatenate the random features $\Phi(x)$ and $\Delta(x)$, to obtain bias corrected random features $\Phi_\text{bc}(x)=\Phi\oplus \Delta(x)$.
\end{enumerate}

Overall, these random features are essentially a concatenation of
the random features for the truncated power series
with an additional random feature estimate of the truncation error,
which is formed using both the random features for the first $R$ terms
and the random features for the $(R+1)$th term.
A nice property of the procedure above is that it \emph{re-uses}
the random features $\Phi(x)$ to estimate the error.
\clearpage
\section{Experimental details and Additional Results}\label{appendix experimental details}

\subsection{Datasets and featurization}\label{apdx datasets and fingerprints}

The 1000 molecules from GuacaMol are included in our code.
The molecules from the \textsc{dockstring} dataset are available online (\url{https://github.com/dockstring/dataset}).

Note that when using count fingerprints for $T_{DP}$ we used the \emph{square root} of the counts as the fingerprint.
This was done for two reasons:
\begin{enumerate}
    \item To reduce the norms of the vectors (and thereby increase $\zeta$).
    \item To roughly make their interpretation consistent with $T_{MM}$: i.e.\@ the ``weight'' of a fragment in which occurs $n$ times in the numerator/denominator of $T_{DP}$ is $n$ times the weight of a fragment which occurs once.
\end{enumerate}

\subsection{Details of $T_{MM}$ random features}\label{apdx:explain tmm features and hash}

We use the random hash from \citet{ioffe2010improved} in our implementation.
To hash vectors in $\mathbb{R}^D$
random variables
$r_i, c_i\sim \Gamma(2, 1)$, $\beta_i\sim U(0, 1)$
are drawn i.i.d.\@ for $i=1,\ldots,D$,
and then the following\footnote{
Note that the presentation of equations~\ref{eqn:ioffe-hash-first}--\ref{eqn:ioffe-hash-last}
differs slightly from \citet{ioffe2010improved} who defines $y_i$ and $a_i$ to be the exponential
of equations~\ref{eqn:ioffe-hash-second}/\ref{eqn:ioffe-hash-third}: we wrote it this way because in practice
working in log space avoids numerical stability issues.
This is explicitly suggested in their paper.
}
are computed for all $i$:
\begin{align}
    t_i(x) &= \lfloor (\ln{x_i})/r_i + \beta_i \rfloor \label{eqn:ioffe-hash-first}\,, \\
    y_i(x) &= r_i(t_i(x)-\beta_i) \label{eqn:ioffe-hash-second}\,, \\
    a_i(x) &= \ln{c_i} - y_i(x) - \ln{r_i} \label{eqn:ioffe-hash-third}\,, \\
    i^*(x) &= \arg\min_{i=1,\ldots,D} a_i(x)\,,  \\
    h_{r,c,\beta}(x) &= \left(i^*(x), t_{i^*(x)}(x)\right) \,.\label{eqn:ioffe-hash-last}
\end{align}
Note that equation~\ref{eqn:ioffe-hash-first} uses the convention that $\ln{0}=-\infty$.
These variables do not have a clear interpretation in isolation so we refer the reader
to \citet{ioffe2010improved} for an explanation of why this hashing procedure
produces a random hash for $T_{MM}$.
The hash itself is formed of 2 integers: $i^*\in\{1,\ldots,D\}$
and $t_{i^*}\in\mathbb{Z}$.
This unfortunately means that the number of possible outputs is potentially infinite,
which would require us to potentially sample an arbitrarily large vector $\Xi$.
To avoid this, we first use python's built-in \texttt{hash} function to map this pair of integers to a single integer,
then take the result module $2^{12}=4096$.
This allows us to sample a small (finite) vector $\Xi$,
and although it introduces a small amount of bias this does not appear to be an issue in practice.

Elements of $\Xi$ are always sampled i.i.d.\@ from either a Rademacher or Gaussian distribution
(the distribution should always be clear from context).

\subsection{Additional results for $T_{DP}$ random features}

\begin{figure}
    \centering
    \includegraphics{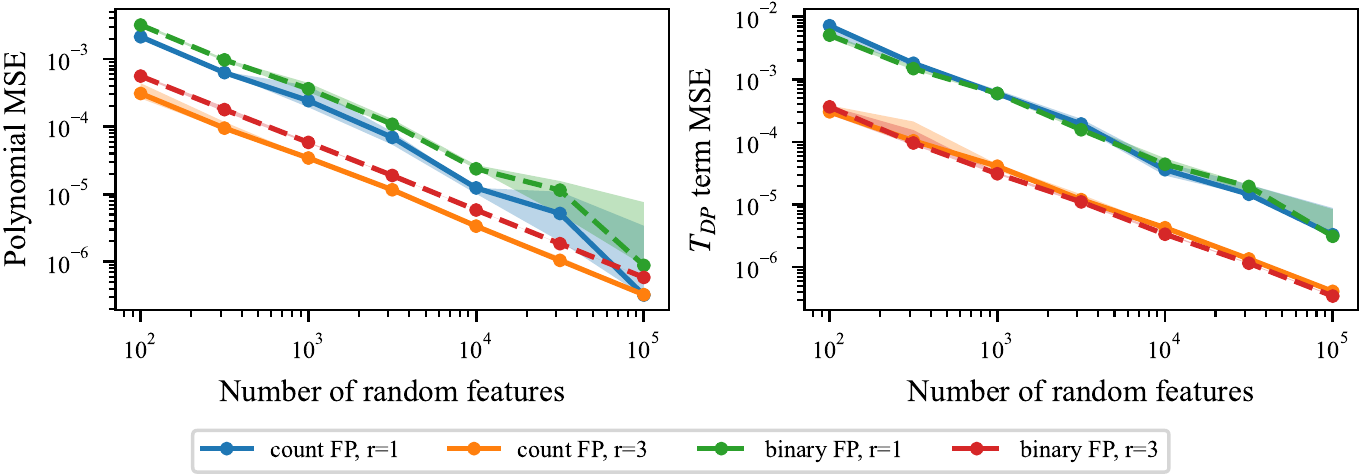}
    \caption{
        MSEs for approximating $(x\cdot x')^r$ (\textbf{left})
        and $((x\cdot x')/(\|x\|^2+\|x'\|^2))^r$ (\textbf{right})
        using \textsc{TensorSketch} for various $r$ on count and binary fingerprints
        as a function of the random feature dimension $M$.
    }
    \label{fig:tdp-polynomial-and-single-term-error}
\end{figure}

\begin{figure}
    \centering
    \includegraphics{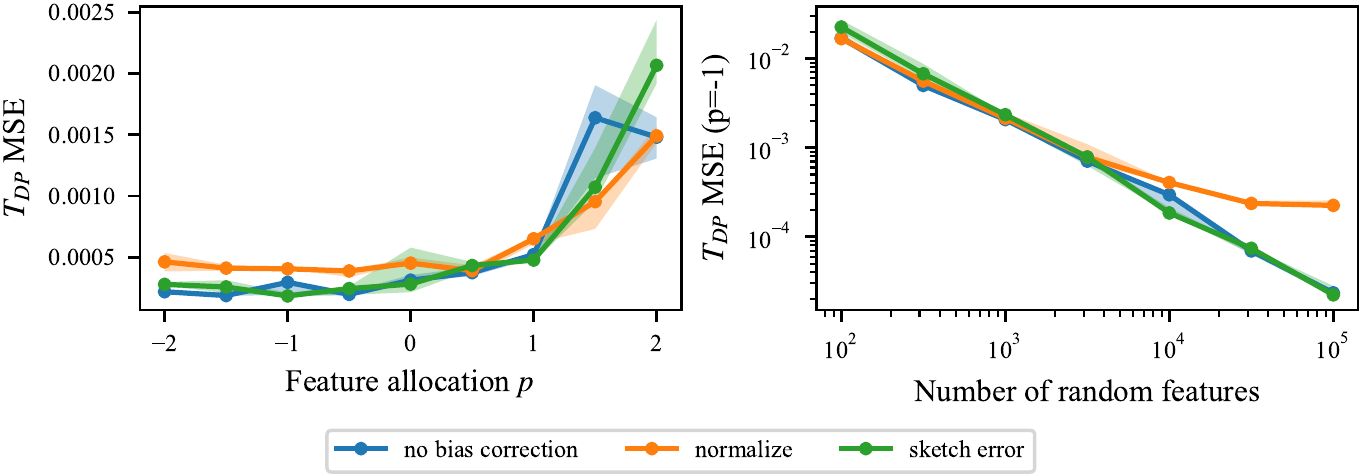}
    \caption{
    Same as Figure~\ref{fig:tdp-total-rf} but with count fingerprints.
    }
    \label{fig:tdp-total-rf-count-apdx}
\end{figure}

Figure~\ref{fig:tdp-polynomial-and-single-term-error} shows that the error for both polynomial random features
from \textsc{TensorSketch} and random features for individual terms in $T_{DP}'s$ power series
decreases with $O(1/M)$.
Figure~\ref{fig:tdp-total-rf-count-apdx} shows the overall error for count fingerprints.

\subsection{Gaussian process training details from section~\ref{sec:regression-expt}}\label{apdx:regression-expt-details}

Our GP models use a constant mean and Gaussian noise.
Specifically, the model of the observed labels $y$ is:
\begin{align*}
    f(X) &\sim \mathcal{GP}\left(\mu, ak(X,X)]\right) \\
    y(X) &\sim \mathcal{N}\left(f(X), \sigma^2I\right)
\end{align*}
Therefore, the GP hyperparameters are three scalars:
\begin{itemize}
    \item The constant mean, $\mu$
    \item The kernel amplitude/outputscale $a$
    \item The observation noise $\sigma^2$
\end{itemize}

GP performance will be greatly affected by the choice of kernel hyperparameters.
To ensure that the difference in performance is not due to differences in kernel hyperparameter settings
we fit them in a consistent way for all methods.
Specifically,
for all methods,
we start by fitting an exact GP to a random subset of $M=5000$ data points by maximizing the marginal likelihood with L-BFGS.
The different approximations are as follows:
\begin{itemize}
    \item \textbf{Random subset:}
        \texttt{gpytorch}'s exact GP inference is applied on a random subset of $M$ data points
        (a different subset then was used to fit the hyperparameters).
    \item \textbf{SVGP:}
        \texttt{scikit-learn}'s K-means clustering is run with $M$ clusters to produce an initialization of the inducing points.
        These inducing points are used to initialize a sparse variational GP \citep{hensman2013gaussian},
        implemented in \texttt{gpytorch}.
        The variational parameters are optimized via natural gradient descent with a learning rate of $10^{-1}$
        and a batch size of $2M=10\,000$ for one pass through the dataset.
        Although the inducing point locations themselves could be further optimized with gradient descent,
        we chose not to do so for this experiment.
    \item \textbf{RFGP:}
        First, the training and test sets are converted into $M$ dimensional random features.
        Then, the posterior equations for inference in Bayesian linear models
        are used to make predictions on the test set given the training set \citep[equations 3.49--3.51]{bishop2006pattern}.
        The computation is done in a specific order to avoid forming any $n\times n$ matrices
        (the Woodbury matrix identity is used extensively for this).
        This is fairly clearly documented in the code.
\end{itemize}

Note that for $T_{MM}$ it was vital to implement the kernel as
\begin{equation}\label{eqn:minmax-using-l1-distance}
    T_{MM}(x, x') = \frac{\|x\|_1+\|x'\|_1 - \|x-x'\|_1}{\|x\|_1+\|x'\|_1 + \|x-x'\|_1}
\end{equation}
instead of a naive implementation which follows equation~\ref{eqn:tanimoto-for-sets}.
This is because such an implementation requires forming a tensor of shape $N\times M \times d$
when calculating the kernel between $N$ and $M$ points in $\mathbb{R}^d$
(for example $T_{ijk}=\min(x_{ik}, y_{jk})$)
which can exceed memory limits for modest $N,M,d$.
This identity allows us to use the relatively efficient \texttt{torch.cdist} function.
Note that we did \emph{not} invent this identity ourselves;
we discovered it in \citet{ioffe2010improved}.

Even with this implementation, $M=5000$ inducing points did not fit into GPU memory for $T_{MM}$,
so all experiments were run on CPU.

\subsection{Metrics and additional results from section~\ref{sec:regression-expt}}\label{apdx:regression-expt-extra-results}

\paragraph{Metrics}
Table~\ref{tab:gp-regression-logp} reports \emph{average log probability},
which calculated by first calculating the log probability of each test point individually
(i.e.\@ marginally, not jointly), then averaging these values.
We also report the coefficient of determination ($R^2$ score),
calculated using the function \texttt{sklearn.metrics.r2\_score}.
This measures only the error of the GP mean.
A value of 1 indicates perfect prediction,
while a value of 0 can be achieved by predicting the sample mean for every data point.

\paragraph{Additional results}
Table~\ref{tab:gp-regression-r2} reports the average $R^2$.
Trends are similar to Table~\ref{tab:gp-regression-logp}.
We also include baselines for two types of graph neural network:
Attentive FP \citep{xiong2019pushing}
and MPNN \citep{gilmer2017neural}.
Although the performance of GP methods does not match that of Attentive FP,
it is often close,
suggesting there is potential for approximate GPs to be competitive with graph neural networks
for molecular property prediction.

\begin{table*}[tb]
\caption{
Average $R^2$ score for approximate GPs on \textsc{dockstring} dataset.
Attentive FP and MPNN results
are taken from \citet{garcia2022dockstring}.
Other details are the same as Table~\ref{tab:gp-regression-logp}.
}
\label{tab:gp-regression-r2}
\begin{center}
\resizebox{0.95\textwidth}{!}{
\begin{sc}
\begin{tabular}
{llr@{\hspace{0.02cm}$\pm$\hspace{0.02cm}}l@{\hspace{0.30cm}}r@{\hspace{0.02cm}$\pm$\hspace{0.02cm}}l@{\hspace{0.30cm}}r@{\hspace{0.02cm}$\pm$\hspace{0.02cm}}l@{\hspace{0.30cm}}r@{\hspace{0.02cm}$\pm$\hspace{0.02cm}}l@{\hspace{0.30cm}}r@{\hspace{0.02cm}$\pm$\hspace{0.02cm}}l@{\hspace{0.30cm}}}
\toprule
Kernel & Method & \multicolumn{2}{c}{ESR2} & \multicolumn{2}{c}{F2} & \multicolumn{2}{c}{KIT} & \multicolumn{2}{c}{PARP1} & \multicolumn{2}{c}{PGR}\\
\midrule
$T_{MM}$ & Rand subset GP & 0.514 & 0.002 & 0.810 & 0.002 & 0.695 & 0.002 & 0.849 & 0.001 & 0.426 & 0.007\\
 & SVGP & 0.578 & 0.001 & 0.861 & 0.000 & 0.749 & 0.000 & 0.889 & 0.000 & 0.542 & 0.002\\
 & RFGP ($\Xi$ Rad.) & 0.518 & 0.002 & 0.838 & 0.001 & 0.703 & 0.002 & 0.864 & 0.001 & 0.465 & 0.003\\
 & RFGP ($\Xi$ Gauss.) & 0.517 & 0.002 & 0.837 & 0.000 & 0.702 & 0.001 & 0.864 & 0.001 & 0.467 & 0.004\\
\midrule
$T_{DP}$ & Rand subset GP & 0.513 & 0.003 & 0.817 & 0.001 & 0.696 & 0.002 & 0.851 & 0.001 & 0.384 & 0.011\\
 & SVGP & 0.581 & 0.001 & 0.865 & 0.000 & 0.753 & 0.001 & 0.889 & 0.000 & 0.543 & 0.002\\
 & RFGP (plain) & 0.546 & 0.001 & 0.852 & 0.001 & 0.716 & 0.002 & 0.876 & 0.000 & 0.512 & 0.002\\
 & RFGP (norm) & 0.546 & 0.001 & 0.852 & 0.001 & 0.715 & 0.002 & 0.876 & 0.000 & 0.513 & 0.002\\
 & RFGP (sketch) & 0.545 & 0.001 & 0.852 & 0.001 & 0.716 & 0.002 & 0.876 & 0.000 & 0.510 & 0.002\\
\midrule
N/A & MPNN & 0.506 & 0.001  & 0.798 & 0.005  & 0.755 & 0.005  & 0.815 &  0.010 & 0.324 &  0.096 \\
N/A & Attentive FP & 0.627 &  0.010 & 0.880 &  0.001 & 0.806 &  0.008 & 0.910 &  0.002 & 0.678 & 0.008 \\
\bottomrule
\end{tabular}
\end{sc}
}
\end{center}
\end{table*}

\end{document}